\RequirePackage[svgnames]{xcolor}

\documentclass[11pt,letterpaper]{mystyle}
\usepackage[all]{hypcap}
\usepackage[svgnames]{xcolor}
\usepackage[comma,authoryear,compress]{natbib}
\usepackage{amsmath}

\hypersetup{
    colorlinks = true,
    citecolor = {YaleBlue},
}

\usepackage{algorithm}
\usepackage{algorithmicx}
\usepackage{algpseudocode}
\usepackage{microtype}
\usepackage{float}

\usepackage{comment}
\usepackage{amssymb}
\usepackage{mathtools}
\usepackage{amsthm}
\usepackage{nicefrac}
\usepackage{dsfont}
\usepackage{subcaption}
\usepackage{cleveref}
\usepackage{bm}

\usepackage[utf8]{inputenc}
\usepackage[T1]{fontenc}
\usepackage{booktabs}
\usepackage{amsfonts}
\usepackage{graphicx}
\usepackage{wrapfig}
\usepackage{enumitem}
\usepackage[font=small,labelfont=bf]{caption}
\usepackage{color}
\usepackage{adjustbox}
\usepackage{multirow}
\usepackage{makecell}
\usepackage{bbm}
\usepackage{xspace}

\usepackage{thmtools,thm-restate}

\newtheorem{theorem}{Theorem}[section]

\newtheorem{lemma}[theorem]{Lemma}

\newtheorem{informal theorem}[theorem]{Theorem (informal statement)}

\newtheorem{proposition}[theorem]{Proposition}
\newtheorem{corollary}[theorem]{Corollary}

\newcommand{\cC}{\mathcal{C}}
\newcommand{\cD}{\mathcal{D}}

\newcommand{\cI}{\mathcal{I}}

\newcommand{\cL}{\mathcal{L}}

\newcommand{\cR}{\mathcal{R}}

\newcommand{\sR}{\mathbb{R}}

\DeclareMathOperator*{\argmin}{argmin}

\newcommand{\R}{\sR}

\newcommand{\GLU}{\mathrm{GLU}}
\newcommand{\LN}{\mathrm{LayerNorm}}
\newcommand{\Attn}{\mathrm{Attn}}
\newcommand{\sm}{\mathrm{softmax}}
\newcommand{\std}{\mathrm{std}}

\newcommand{\base}{\mathrm{base}}

\newcommand{\steer}{\mathrm{steer}}
\newcommand{\FT}{\mathrm{FT}}

\renewcommand{\t}[1]{\tilde{#1}}

\newcommand{\One}{\mathbf{1}}

\newcommand{\col}{\cC}
\newcommand{\row}{\cR}

\definecolor{mygreen}{RGB}{11, 102, 35}
\newcommand{\green}[1]{\color{mygreen}#1 \color{black}}
\newcommand{\red}[1]{\color{red}#1 \color{black}}
\newcommand{\blue}[1]{\color{blue}#1 \color{black}}

\newcommand{\smchange}{\Delta}
\newcommand{\param}{\delta}

\title{Weight Updates as Activation Shifts: A Principled Framework for Steering}

\runningtitle{Weight Updates as Activation Shifts: A Principled Framework for Steering}

\author[1,*]{Dyah Adila}
\author[1,*]{John Cooper}
\author[1]{Alexander Yun}
\author[1]{Avi Trost}
\author[1]{Frederic Sala}

\affil[*]{Equal Contribution}
\affil[1]{Department of Computer Science, University of Wisconsin-Madison}

\correspondingauthor{Dyah Adila: adila@wisc.edu, John Cooper: jfcooper2@wisc.edu}

\begin{document}

\begin{abstract}
Activation steering promises to be an extremely parameter-efficient form of adaptation, but its effectiveness depends on critical design choices---such as intervention location and parameterization---that currently rely on empirical heuristics rather than a principled foundation. We establish a first-order equivalence between activation-space interventions and weight-space updates, deriving the conditions under which activation steering can replicate fine-tuning behavior. This equivalence yields a \textbf{principled framework for steering design} and identifies the post-block output as a theoretically-backed and highly expressive intervention site. We further explain why certain intervention locations outperform others and show that weight updates and activation updates play distinct, complementary functional roles. This analysis motivates a new approach---\textbf{joint adaptation}---that trains in both spaces simultaneously.
Our post-block steering achieves accuracy within $0.2\%\text{--}0.9\%$ of full-parameter tuning, on average across tasks and models, while training only $0.04\%$ of model parameters. It consistently outperforms prior activation steering methods such as ReFT and PEFT approaches including LoRA, while using significantly fewer parameters. Finally, we show that joint adaptation often surpasses the performance ceilings of weight and activation updates in isolation, introducing a new paradigm for efficient model adaptation.
\end{abstract}
\footnotemark
\footnotetext{Code is available in the following link: \url{https://github.com/SprocketLab/steerling.git}}

\maketitle

\section{Introduction}
\begin{figure*}[htp!]
    \centering
    \begin{minipage}{0.75\linewidth}
        \centering
        \includegraphics[width=\linewidth]{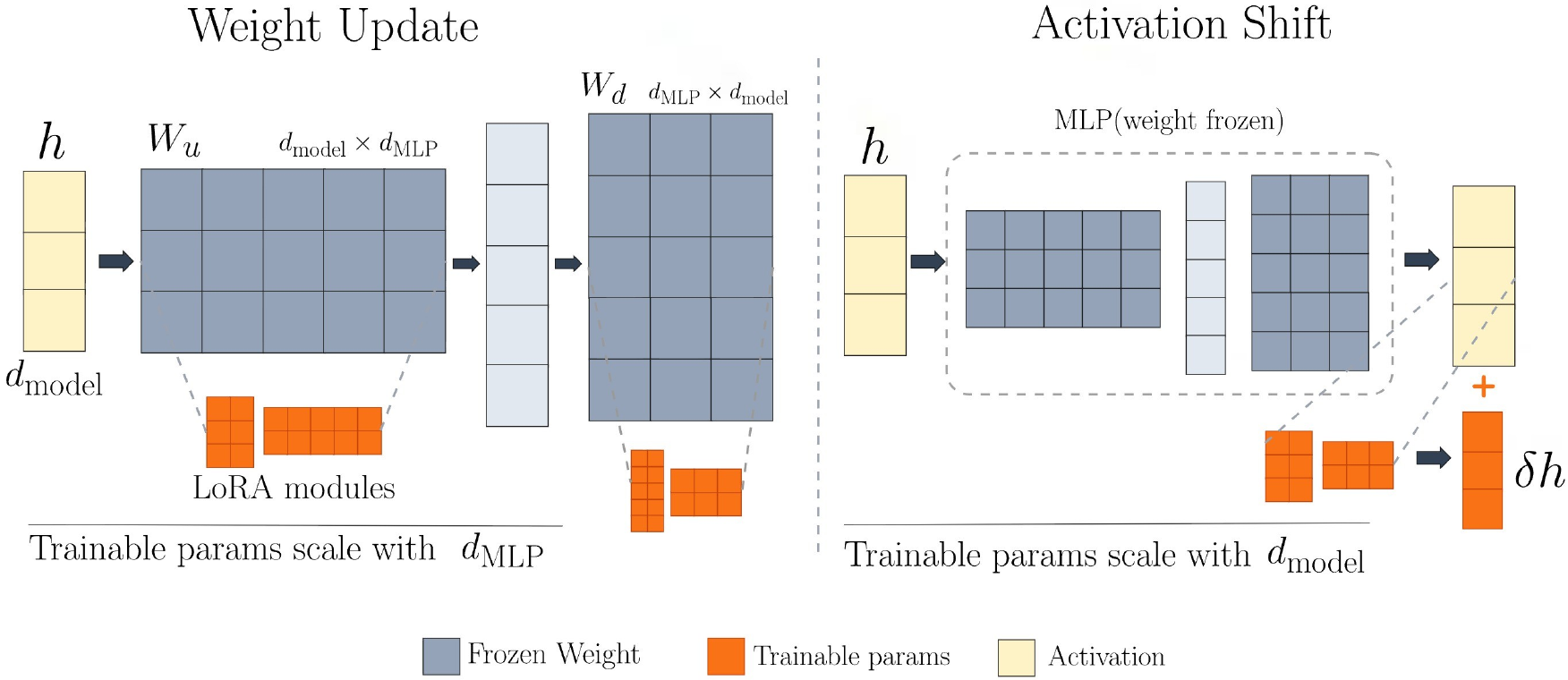}
    \end{minipage}\hfill
    \begin{minipage}{0.23\linewidth}
        \centering
        \includegraphics[width=\linewidth]{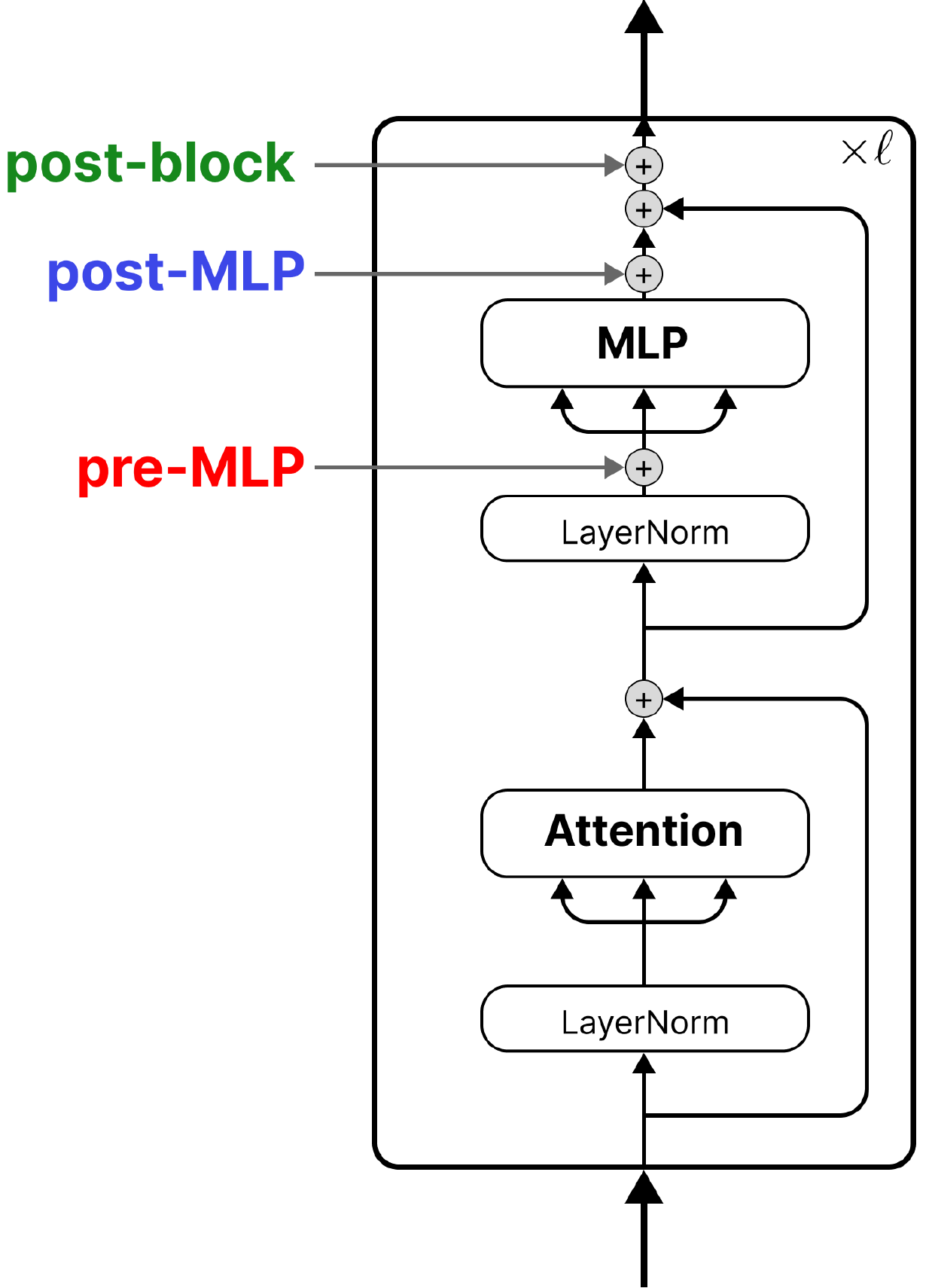}
    \end{minipage}
    \caption{Left: Weight tuning scales with MLP dimension, while activation steering scales only with model dimension. Right: pre-MLP vs. post-MLP vs. post-block (ours). We steer after the skip connection is added back to the MLP output, accounting for both pathways.}
    \label{fig:main}
\end{figure*}

The massive parameter counts of modern large language models (LLMs) have spurred the development of parameter-efficient fine-tuning (PEFT) methods. While these significantly reduce the number of trainable parameters, they still require updating and storing weight-space modifications. \emph{Activation steering} serves as the next step in this evolution, further reducing adaptation costs by intervening directly on intermediate activations during the forward pass. By shifting the \emph{locus} of change from the weights to the activations, these methods bypass much of the memory overhead typically associated with weight-space updates (Fig. \ref{fig:main}, left). 

Despite their empirical success, activation steering methods remain largely heuristic-driven. Current research in this area primarily relies on exhaustive trial-and-error to determine critical design choices, such as optimal parameterization and intervention sites. For instance, JoLA \citep{lai2025joint} treats the intervention site as a learnable parameter and offers little insight into the underlying mechanics. Furthermore, most prominent methods evaluate specific intervention points in isolation, lacking a comparative framework to explain why certain loci outperform others. Consequently, the design of steering interventions remains a black box process lacking generalizable principles.

In this work, we move beyond these heuristics by establishing a formal equivalence mapping between weight-space updates and activation-space interventions. By treating traditional weight-based fine-tuning as the gold standard target behavior, \textbf{\emph{our framework reveals the precise conditions under which activation steering can faithfully replicate weight-space dynamics}}. Our analysis identifies the \textbf{post-block output}—where the skip-connection is added back to the MLP output—as a highly expressive intervention locus (Figure \ref{fig:main}, right). Unlike prior methods that operate on isolated pathways, this site modulates each layer's full residual stream. The resulting steering approach \textbf{\emph{achieves accuracy within $0.2\%\text{--}0.9\%$ of full-parameter fine-tuning (SFT), on average across tasks and models, while training only $0.04\%$ of model parameters}}. We also consistently outperform existing steering methods like ReFT, as well as PEFT methods despite using $15 \times$ less parameters.

Theoretically, we explain why specific intervention sites outperform others and verify that post-block updates can mimic post-MLP updates under simple geometric assumptions. Beyond identifying this principled and highly expressive locus, we quantify the different functions expressed by \textbf{\emph{weight versus activation updates, revealing that they are fundamentally complementary}}. These findings lead to us to explore a new approach: \textbf{\textit{jointly learning in both weight and activation spaces}}. This strategy surpasses the performance limits of either method alone by up to 3.8\%; we achieve this by applying an orthogonality constraint that prevents the solution from collapsing into either individual subspace. We provide theoretical and empirical evidence that this constraint is crucial for maintaining independent learned solutions in each space. 

\textbf{Our Contributions.}
This work provides a principled foundation for activation-space adaptation, shifting the field away from empirical trial-and-error. Our contributions are:
\begin{itemize}
    \item \textbf{First-Order Equivalence Framework:} We establish a formal mapping that identifies the conditions under which activation steering replicates weight-space fine-tuning.
    \item \textbf{Fine-tuning and Steering Separation:} While there are first-order similarities between these methods, we show they are fundamentally different when MLP feature maps behave differently than an identity map.
    \item \textbf{Identification of the Post-Block Locus:} Our framework identifies the post-block output as a principled and highly expressive intervention site. By integrating both the residual and MLP pathways, it enables steering to reach within $0.2\%\text{--}0.9\%$ of full-parameter fine-tuning on average across tasks and models, while training only $0.04\%$ of parameters. This consistently outperforms interventions at isolated sub-layer sites.
    \item \textbf{Joint Adaptation:} We introduce a method for joint learning in weight and activation spaces. Using an orthogonality constraint to prevent functional redundancy, we enable a joint adaptation regime that surpasses the performance ceilings of either method in isolation by up to 3.8\%.
\end{itemize}

\section{Related Work}
We briefly discuss relevant work:

\noindent \textbf{Parameter Efficient Fine-tuning} Standard PEFT methods reduce adaptation costs by training small weight-space modules. This includes both approaches like classic bottleneck adapters \citep{houlsby2019parameter}, which insert trainable layers into the transformer block, and \textit{reparameterization} approaches like LoRA \citep{hu2022lora} and its variants (DoRA \citep{liu2024dora}, QLoRA \citep{dettmers2023qlora}). While they significantly reduce fine-tuning costs, such weight-space modifications do not seek to use intermediate activations---the key component in steering. 

\noindent \textbf{Activation Steering.} Activation steering intervenes directly on intermediate activations during the forward pass, often achieving parameter savings an order of magnitude greater than weight-based adapters. Initial research focused on identifying latent subspaces for specific traits, such as truthfulness \citep{li2023inference}, style \citep{rimsky2024steering}, function execution \citep{todd2023function}, or alignment \citep{adila2024free}, and steer the model towards the desired subspaces. More recently, methods like ReFT \citep{wu2024reft}, LoFiT \citep{yin2024lofit}, and JoLA \citep{lai2025joint} have transitioned toward \textit{trainable} interventions that minimize a standard fine-tuning loss. 

\noindent However, these methods remain largely heuristic-driven; design choices regarding intervention locus and parameterization are typically determined through empirical search rather than mechanical understanding. While JoLA \citep{lai2025joint} attempts to unify these heuristics by treating design choices as hyperparameters to be optimized, this increases optimization complexity and does not change the black-box nature of steering design.

\noindent \textbf{Interplay between adaptation paradigms.} Recent research has sought to unify different adaptation paradigms through a mechanistic lens. Significant effort has been directed toward interpreting in-context learning (ICL) as a form of implicit weight fine-tuning \citep{dai-etal-2023-gpt, von2023transformers, zhang2024trained}, with recent evidence suggesting these implicit updates are effectively rank-1 \citep{dherin2025learning}. This provides a direct bridge to activation steering, as an activation update can be viewed as a rank-1 LoRA update constrained to a single direction. Furthering this connection, \citep{bigelow2025belief} established a link between ICL and steering activations toward pre-identified behavioral subspaces. 

\noindent Our work completes this triangle by providing the first analytical bridge between explicit weight-space fine-tuning and trainable activation adapters. By deriving the mathematical equivalence between these two spaces, we move beyond black-box heuristics to a principled understanding of optimal intervention loci. We go further, investigating joint learning across the weight and activation spaces, deriving methods for these updates to cooperate.



\section{A Unifying Framework for Weight Updates and Activation Steering}
\label{sec:unifying_framework}

We begin by describing both steering and fine-tuning within a multi-layer perceptron (MLP) submodule in a Transformer model and highlight their similarities. We begin by describing the similarities between steering and fine-tuning gradients in \Cref{sec:first_order}. Next, we establish an oracle useful for theoretical analyses as a clean learning target; using this, along with empirical evidence, shows that \textbf{\green{post-block} steering leads to expressivity not seen in \blue{post-MLP} steering}. The locations of \red{pre-MLP}, \blue{post-MLP}, and \green{post-block} steering can all be seen in \Cref{fig:main}.

\subsection{Setup and Notation}

We use $\smchange \cdot$ for a small change to a variable, while $\param \cdot$ is a learned update to a model parameter. Steering updates a model's activations by updating $h$ to $h \to h + \param h$. 
The update $\param h$, called an \textit{adapter}, can depend on the input $h$. For example, in ReFT \citep{wu2024reft}, $\param h$ is a specific, low-rank linear function acting on $h$. An alternative is a low-rank autoencoder, $\param h = W_2 \phi(W_1 h)$, 
where $W_2 \in \sR^{d \times r}$ and $W_1 \in \sR^{r \times d}$. Fine-tuning is an update to the weights of a model, replacing $W$ with $W + \param W$.
The parameter $\param W$ is a learned constant.

\subsection{First Order Analysis}
\label{sec:first_order}
First, we study the relationship between steering and fine-tuning in order to extract insights into where steering interventions should take place. 
Consider the Gated Linear Unit (GLU), a common variant of MLP \citep{dauphin2017language, shazeer2020glu} used in most modern LLMs, such as Llama, Gemma, and Qwen. This module is parameterized as $\GLU(h) = W_d(\phi(W_g h) \odot W_u h).$

We compare two different small-norm perturbations to this function: a small change to the activations $h$ and a small change to the parameters $W_d$, $W_g$, $W_u$; with the form
\begin{align*}
\smchange &\GLU_{\steer}(h) = \\
& W_d\Big[
\underbrace{\big(\phi'(a_g)\odot a_u\big)\odot (W_g \smchange h)}_{\text{gated path}}
+
\underbrace{\phi(a_g)\odot (W_u \smchange h)}_{\text{un-gated path}}
\Big]  \\
&+O(\|\Delta h\|^2),
\end{align*}
and 
\begin{align*}
    \smchange &\GLU_{\FT}(h) = 
    (\smchange W_d)m + \\
    &W_d\Big[\underbrace{\big(\phi'(a_g)\odot a_u\big)\odot ((\smchange W_g) h)}_{\text{gated path}}+\underbrace{\phi(a_g)\odot ((\smchange W_u) h)}_{\text{un-gated path}}
\Big] \\
&+O((\|\Delta W_d\| + \|\Delta W_g\| + \|\Delta W_u\|)^2).
\end{align*}
This reveal a close relationship between steering and fine-tuning, differing only by the term $(\smchange W_d)m$.


How can we account for the $(\smchange W_d)m$ term not present in the steering perturbation? A simple way to accommodate it is by using a \textbf{\blue{post-MLP} steering} intervention $\GLU(h) \mapsto \GLU(h) + \param h$ rather than a {\red{pre-MLP} steering} intervention $\GLU(h) \mapsto \GLU(h + \param h)$. Empirically, linear adapters are sufficient in our settings (Section \ref{section:result_rank}), consistent with the first-order viewpoint.


\begin{center}
\fbox{
  \parbox{0.9\linewidth}{
    \textbf{Takeaway 1.}  
    For an MLP with small perturbations to weights/activations, \blue{post-MLP} steering can account for fine-tuning updates that \red{pre-MLP} steering cannot.
  }
}
\end{center}



\subsection{\texorpdfstring{\green{Post-Block}}{Post-Block} and \texorpdfstring{\blue{Post-MLP}}{Post-MLP}}

\label{sec:oracle} 
Equipped with these insights, we study steering expressivity. 
To further analyze the relationship between steering and fine-tuning, we introduce a theoretical tool: the \textit{oracle}. It is a freely-parameterized activation update that exactly matches the hidden state of a fully fine-tuned (SFT) model:
\[\param h_\text{oracle} = h_\FT - h_{\text{base}}.\]
While not a practical model, the oracle provides a clean learning target for understanding the expressivity of different steering locations.
A key question is \textit{where} this oracle should be applied. Placing it before or after the MLP captures only the MLP's contribution, missing effects from the attention sublayer and skip connection. Figure \ref{fig:mlp_v_attn_norm} confirms this: MLP outputs account for only 40–70\% of total block output magnitude across layers. This motivates placing the oracle \textit{\green{post-block}}—after the skip connection—to capture the full residual stream update.


\begin{figure}[t]
    \centering
    \begin{minipage}[c]{0.48\textwidth}
        \centering
        \includegraphics[width=\linewidth]{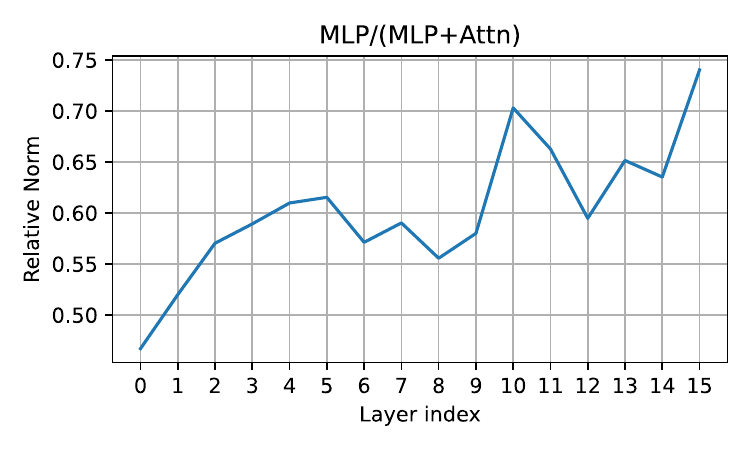}
        \caption{Average MLP output norm with respect to the layer's block output for Llama-3.2B on Winogrande. Post-MLP steering covers $\leq70\%$ of the change from finetuning.}
        \label{fig:mlp_v_attn_norm}
    \end{minipage}
    \hfill
    \begin{minipage}[c]{0.48\textwidth}
        \centering
        \includegraphics[width=\linewidth]{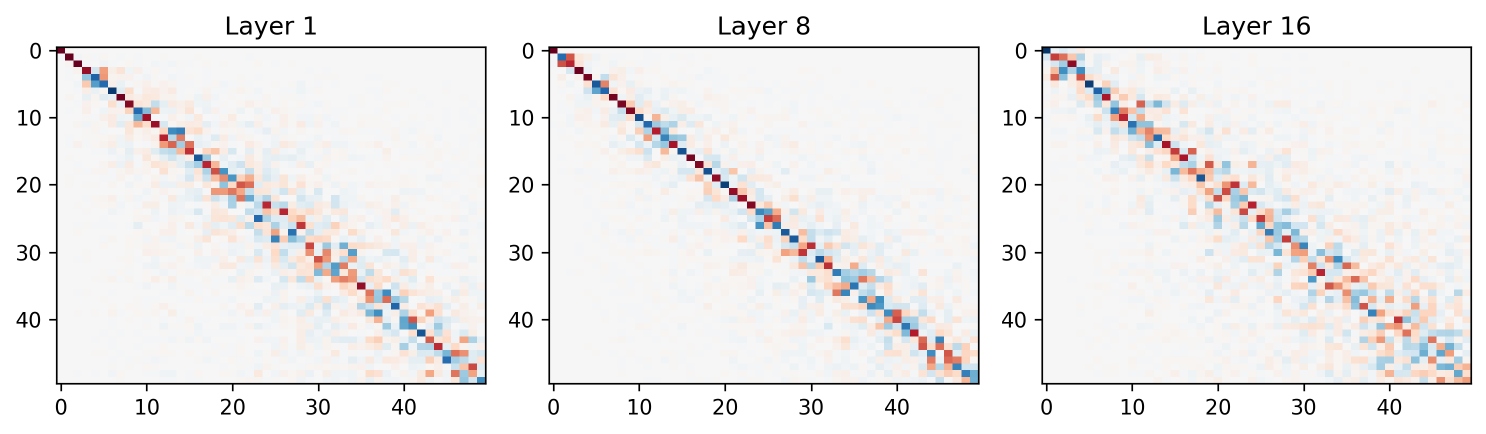}
        \caption{Joint training when done naively. The dot products between the top right singular vectors of the initial updates for both weight and activation adapters. Weight and activation adapters learn to fit the same subspace early on, represented by the diagonal.}
        \label{fig:result_joint_features}
    \end{minipage}
\end{figure}


A key advantage of this oracle is that it enables layer-wise analysis: rather than reasoning about adapter interactions across the full model, we can study each layer independently. One might worry that learned adapters collaborate across layers in ways that violate this decomposition. However, if such an oracle—specifically, one that is close to a linear transformation of the input—can match the fine-tuned model at each layer, then steering the same type of adapters is \emph{at least} as expressive as fine-tuning. Empirically, we find this to be the case: linear steering is sufficient to recover fine-tuned behavior (see \Cref{sec:experiments}). For our theoretical results, we typically have a vector-valued target learned under MSE loss. These vector-valued targets come from this oracle. 



\label{sec:block_vs_mlp}

With the similarity between steering and fine-tuning understood, we investigate how closely a linear \green{post-block} steering update can mimic \blue{post-MLP} steering.

\begin{theorem}
    Let $V$ and $V'$ be the right-singular matrices of $h+\Attn(h) + \GLU(h + \Attn(h))$ and $A_p\GLU(h + \Attn(h))$ respectively, and let $X,Y$ be such that $Y_i = h_i + \Attn(h_i)$ and $X_i = \GLU(Y_i)$. Then, the optimal relative error satisfies
    \[\min_A\frac{\|A(X+Y) - A_{p} X\|_F^2}{\|A_p X\|_F^2} = \sum_{i=1}^d \left(\frac{\sigma_i^2}{\sum_{j=1}^d \sigma_j^2}\right) \sin^2 \theta_i,\]
    where $\sigma_i$ is the $i$-th singular value of $A_p X$ and $\theta_i$ is the $i$-th principal angles between $V$ and $V'$.
\end{theorem}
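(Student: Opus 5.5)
This is a least-squares problem in $A$: we project $A_pX$ onto the row space of $Z := X+Y$. Write $Z$ and $B := A_pX$ as $d\times n$ matrices whose columns are the tokens. Then $\min_A \|AZ - B\|_F^2$ is a row-wise regression, with minimizer $A^\star = BZ^+$, and the residual is $B(I - P_Z)$. Here $P_Z = Z^+Z = VV^\top$ is the orthogonal projector onto the row space of $Z$, and $V$ is the matrix of right-singular vectors of $Z$ for its nonzero singular values. So the first step is to record
\[
\min_A \|AZ-B\|_F^2 = \|B(I-VV^\top)\|_F^2 = \|B\|_F^2 - \|BV\|_F^2 .
\]

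Next I expand $B$ through its own SVD, $B = U'\Sigma' V'^\top$ with $\Sigma' = \mathrm{diag}(\sigma_1,\dots,\sigma_d)$. By unitary invariance,
\[
\|B(I-VV^\top)\|_F^2 = \|\Sigma' V'^\top (I-VV^\top)\|_F^2 = \sum_{i=1}^d \sigma_i^2\, \|(I-VV^\top)v'_i\|^2 ,
\]
where $v'_i$ is the $i$-th right-singular vector of $B$. Each term is $\sigma_i^2$ times the squared sine of the angle between $v'_i$ and the subspace $\mathrm{span}(V)$. Dividing by $\|B\|_F^2 = \sum_j \sigma_j^2$ gives the claimed weighted form, with $\sin^2\theta_i := 1 - \|V^\top v'_i\|^2$.

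The main obstacle is identifying these per-vector angles with the principal angles between $\mathrm{span}(V)$ and $\mathrm{span}(V')$. In general the individual singular vectors $v'_i$ need not be principal vectors, so the two notions can differ. I would handle this in one of two ways.

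1. State the convention explicitly: $\theta_i$ is the angle between the $i$-th right-singular direction $v'_i$ of $A_pX$ and the subspace $\mathrm{span}(V)$. This is the quantity that naturally arises, and it coincides with the principal angles whenever the $v'_i$ are principal vectors.
2. Show alignment directly. If $V'^\top VV^\top V'$ is diagonal, the $v'_i$ are principal vectors. Its diagonal entries are the values $\cos^2\theta_i$, and the equality holds exactly.

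Using the cosine–sine decomposition for the pair $(V, V')$, I would check that the unweighted sum $\sum_i \sin^2\theta_i$ is the same under both conventions. The weighted version needs the diagonal condition, and I would state it as part of the definition of $\theta_i$.

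Finally, one degenerate case needs care. If $B$ has rank less than $d$, the terms with $\sigma_i = 0$ contribute nothing, so the angles chosen for them are irrelevant. Assuming $B \neq 0$ makes the ratio well defined.
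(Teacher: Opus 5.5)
Your argument is the paper's: both reduce the problem to the residual $A_pX(I-VV^\top)$ and then expand it through the SVD of $A_pX$ as $\sum_i\sigma_i^2\|(I-VV^\top)v_i'\|_2^2$ (you get the residual via $Z^+$, the paper via the normal equations with $((X+Y)(X+Y)^\top)^{-1}$, so your version also covers rank-deficient $X+Y$). Your caveat about the final step is well founded and marks a gap in the paper's own proof: the paper simply asserts $\|(I-VV^\top)V_i'\|_2^2=\sin^2\theta_i$ for the principal angles, but these numbers are the diagonal entries of $V'^\top(I-VV^\top)V'$, not its eigenvalues (e.g.\ for row spaces $\mathrm{span}(V')=\mathrm{span}(e_1,e_2)$ and $\mathrm{span}(V)=\mathrm{span}\bigl(\cos\alpha\,(e_1+e_2)/\sqrt{2}+\sin\alpha\,e_3,\ (e_1-e_2)/\sqrt{2}\bigr)$ the true ratio is $\tfrac12\sin^2\alpha$, which matches neither ordering of the principal angles $\{0,\alpha\}$ unless $\sigma_1=\sigma_2$), so the identity holds exactly only under your per-vector convention, or under your diagonality condition with $\theta_i$ indexed by $v_i'$ rather than sorted.
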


\begin{proof}
    See Appendix \ref{app:specific_location}.
\end{proof}

This proposition tells us that the ability for a \green{post-block} steering to match a \blue{post-MLP} steering depends on how similar the output of the GLU and the output of $h + \Attn(h)$ are. The greater the difference, the greater the principal angles \citep{Stewart90} will be, and therefore the greater the error. When the skip connection is relatively small, these two subspaces will be unlikely to differ much. 

An unfortunate aspect of this error is that we are working with the \textit{right} principal angles rather than the left ones. This means a connection exists in sample space rather than in feature space. To gain intuition about how $A_p X$ and $X + Y$ relate, note that $V$ and $V'$ diagonalize $X^\top A_p^\top A_p X$ and $(X+Y)^\top (X+Y)$, representing the relationships between the data-points. If $A_p X$ and $X + Y$ keep similar points close and non-similar points distant, then $V$ and $V'$ will be similar, shrinking their principal angles. 

As long as the MLP does not perturb the geometry of the data sufficiently, \blue{post-MLP} steering can be approximately replicated by \green{post-block} steering. It thus makes sense to do the more expressive \green{post-block} steering instead.

\begin{center}
\fbox{
  \parbox{0.9\linewidth}{
    \textbf{Takeaway 2.}  
    \green{Post-block} steering can more directly cover updates to attention from the skip-connection than \blue{post-MLP} steering can.
  }
}
\end{center}

\section{Steering and Fine-Tuning Differences}
\label{sec:joint_motivation}

Despite their similarities, we now provide insights into \textbf{important differences between steering and fine-tuning}. We investigate what takes place when we use \textit{both} steering and fine-tuning simultaneously. In \Cref{section:result_dual}, we will see that the naive approach to this leads to both components learning similar subspaces---producing little improvement. However, this can be mitigated through orthogonality constraints between the two components (\Cref{sec:orthogonality}). 

The first core difference between steering and fine-tuning is given by the following proposition:
\begin{proposition}
    Fix the feature map $F$ (thought of as the hidden state after non-linearity in an MLP) and let
    \[\hat{g}(x) = (I + \param h)(x + (W_2 + \param W) F(x))\]
    be the hybrid weight-update and \green{post-block} steering of $g_{\base}(x) = x + W_2 F(x)$. Then, $\hat{g}_i$ can represent almost any linear combination of the inputs $\{x_j\}$ and the features $\{F_j\}$. Furthermore, if either  $\param h = 0$ or $\param W = 0$, which correspond to fine-tuning and steering, respectively, then arbitrary linear combinations become impossible.
\end{proposition}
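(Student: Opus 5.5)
The plan is to read $\param h$ as a linear map, writing $I+\param h = M$ with $M \in \sR^{d\times d}$. I would then expand the hybrid model as
\[
\hat g(x) = Mx + M(W_2+\param W)F(x) = Mx + NF(x), \qquad N := M(W_2+\param W).
\]
The key observation is that the pair $(M,N)$ can be chosen almost freely. Fix any targets $M^\star \in \sR^{d\times d}$ and $N^\star \in \sR^{d\times k}$, where $k$ is the feature dimension, and suppose $M^\star$ is invertible. Set $\param h = M^\star - I$ and $\param W = (M^\star)^{-1}N^\star - W_2$. Then $\hat g(x) = M^\star x + N^\star F(x)$. So for each output coordinate $i$, $\hat g_i$ is an arbitrary linear combination $\sum_j M^\star_{ij} x_j + \sum_j N^\star_{ij}F_j(x)$.

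The only restriction is invertibility of $M^\star$. This is what ``almost any'' refers to: the invertible matrices are open and dense, so the realizable pairs are dense in the set of all pairs. If one insists on a singular $M^\star$, one can still reach any $N^\star$ whose column space lies in the range of $M^\star$. I would state the result as density or genericity.

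For the negative direction I treat the two special cases separately.

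\textbf{Fine-tuning only ($\param h=0$).} Here $\hat g(x) = x + (W_2+\param W)F(x)$. The coefficient on $x$ is pinned to $I$, so any target with $M^\star \neq I$ is unreachable. To make ``impossible'' rigorous, I need the functions $x_j$ and $F_j$ to be linearly independent as functions of $x$. This holds for a genuinely nonlinear $F$ with no coordinate equal to a linear function of $x$, and I would state that assumption explicitly. Under it, $M^\star x + N^\star F(x) = x + N F(x)$ for all $x$ forces $M^\star = I$.

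\textbf{Steering only ($\param W=0$).} Here $\hat g(x) = Mx + MW_2F(x)$, so the feature coefficient must factor as $N = MW_2$. Only a $d^2$-dimensional family of pairs $(M, MW_2)$ is reachable, inside the $d^2+dk$-dimensional space of all pairs. Concretely:
\begin{itemize}
\item the rows of $N$ must lie in the row space of $W_2$;
\item once $M$ is chosen to hit a given $M^\star$, $N$ is determined as $M^\star W_2$.
\end{itemize}
So any $N^\star \neq M^\star W_2$ is impossible, again using the linear independence of $\{x_j\}\cup\{F_j\}$.

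I expect the main obstacle to be definitional rather than computational. The statement needs the assumption that the coordinate functions $x_j$ and $F_j$ are linearly independent; otherwise representations are not unique and ``impossible'' can fail. It also needs a precise meaning for ``almost any,'' which I would take to be the invertibility, or density, caveat above. Beyond fixing these two conventions, the proof is the short reparametrization $\param W = M^{-1}N - W_2$ together with the two coefficient-matching arguments.
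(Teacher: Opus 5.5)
Your proposal is correct and follows the same route as the paper: set $\param h = M^\star - I$ and $\param W = (M^\star)^{-1}N^\star - W_2$ for a generic (invertible) target, then use coefficient matching for the cases $\param h = 0$ and $\param W = 0$. Your version is slightly cleaner than the paper's---the invertibility condition belongs on $I+\param h$ (the paper's $(A-I)^{-1}$ appears to be a slip for $A^{-1}$), and your explicit linear-independence assumption on $\{x_j\}\cup\{F_j\}$ makes the paper's informal ``unless $F_j$ is a linear map'' negative argument rigorous.
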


Specifically, fine-tuning ($\param h = 0$) forces $\hat{g}$ to keep the $x$ term fixed, and steering ($\param W = 0$) forces $\hat{g}$ to join updates on $x$ and $F(x)$, leading to the same update on both. Without both, it is impossible to update $x$ and $F(x)$ separately. 

\begin{center}
\fbox{
  \parbox{0.9\linewidth}{
    \textbf{Takeaway 3.}  
    Steering and fine-tuning can express different functions. Together, they are \textit{more} expressive.
  }
}
\end{center}

\paragraph{An intuitive example.} 
For a concrete, simple example, consider a task in $\sR^2$ where both $x$ and $F(x)$ are constrained within the $x$-axis. The base model performs:
\[
x \mapsto \begin{bmatrix} x + F(x), & 0 \end{bmatrix}^{\mathsf T}.
\]
Suppose, for example, that $F(x) = \text{ReLU}(x)$. 
\begin{itemize}
    \item \textbf{Fine-tuning} only effects $F$. If $F$ is unable to match the identity function $F(x) = x$, then a linear fine-tuning of the output of $F$ is restricted to reweighting $F$: 
    \[x \mapsto \begin{bmatrix}
        x + \alpha_1 F(x), &
        \alpha_2 F(x)
    \end{bmatrix}^{\mathsf T}.\]
    It is possible that tuning parameters within $F$ itself can modify the output of $F$, but again if $F$ cannot represent the identity, the tuned model cannot use $x$ in their updates.
    \item \textbf{Steering} affects $x + F(x)$ as a unit. So, $x$ and $F(x)$ cannot be separated if they are already in the same subspace, as is the case in our example:
    \[x \mapsto \begin{bmatrix}
        \beta_1(x + F(x)), &
        \beta_2(x + F(x))
    \end{bmatrix}^{\mathsf T}.\]
    Note that in practice, it is possible that $x$ and $F(x)$ are not in the same subspace and could be separated. This would, however, require matrices with large coefficients/norms to amplify these differences to a usable scale. \\
    \item \textbf{Joint training} gives us both. Fine-tuning can split $F$ into the second dimension. Steering then can weight both terms separately, combining them into arbitrary linear combinations. This is even more important for a task limited to the first dimension:
    \[x \mapsto \begin{bmatrix}
        \gamma_1 x + \gamma_2 F(x), &
        0
    \end{bmatrix}^{\mathsf T}.\]
\end{itemize}
The important part in this example is \emph{disentangling the skip-connection}. Without additional dimensions to separate these effects, steering will still have these pieces tied together. Later in Section \ref{sec:result_joint}, we confirm that in real datasets, \textbf{joint training surpasses the performance ceiling of fine-tuning and steering alone}.



\subsection{Joint Learning in Weight and Activation Space}
\label{section:result_dual}

Equipped with the insight that fine-tuning and steering play different functional roles and can be made more expressive when combined, we investigate \textbf{joint adaptation}---a strategy that learns in both spaces simultaneously.


Naively, we could train the parameters in both LoRA-style weight updates and the steering parameters. However, this turns out to work quite poorly, barely out-performing either weight-updates or steering on their own. This stems from the parameters learning in the same subspace.
\begin{proposition}
\label{prop:joint_collapse}
    When learning the simplified model
    \[y'(x) = (I + \param h) (x + (W + \param W)F(x))\]
    for learned matrices $\param W$ and $\param h$, early in training under MSE loss between $y$ and $y'$, these matrices have their dominant singular vectors coming from the top singular vectors of $RY^\top$ and $RF^\top$, where $R$, $F$, and $Y$ are matrices comprising $y - y'(x)$, $F(x)$, and $x + WF(x)$. If these two coincide, the two matrices learn the same subspace.
\end{proposition}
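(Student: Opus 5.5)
We plan to compute the gradients of the MSE loss with respect to $\param W$ and $\param h$ at (or near) initialization, and to read off the structure of the early updates from them. Stack the data as columns: $X$ holds the inputs $x$, $F$ holds the features $F(x)$, and $Y = X + WF$ is the base output. The loss is $L = \tfrac12\|T - (I+\param h)(Y + \param W F)\|_F^2$, where $T$ stacks the targets $y$ (in our setting, the oracle outputs of \Cref{sec:oracle}). Let $R = T - (I+\param h)(Y+\param W F)$ denote the residual matrix. Differentiating gives
\[
\nabla_{\param h} L = -R\,(Y + \param W F)^\top, \qquad \nabla_{\param W} L = -(I+\param h)^\top R\, F^\top .
\]
At initialization $\param h = 0$ and $\param W = 0$ (or both small), so these reduce to $-RY^\top$ and $-RF^\top$ up to lower-order terms.

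The next step is to turn the gradients into the singular structure of the parameters early in training. We use gradient flow (or small-step gradient descent) from zero initialization. For $t$ small, $R(t) = R(0) + O(t)$, so integrating gives
\[
\param h(t) = t\,R_0 Y^\top + O(t^2), \qquad \param W(t) = t\,R_0 F^\top + O(t^2).
\]
If the parameters are LoRA-factored, we argue the same way on the product. With the usual initialization where one factor is zero, for example $\param W = BA$ with $B=0$, the first nonzero update is $B \propto R_0 F^\top A^\top$, so $\param W \propto R_0F^\top A^\top A$. For random $A$ with approximately isotropic $A^\top A$, this is dominated by the top left singular directions of $R_0F^\top$. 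The same holds for $\param h$ with $R_0Y^\top$.

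Finally, we invoke perturbation theory for singular vectors (Wedin's $\sin\Theta$ theorem) to conclude that the top singular vectors of $\param h(t)$ and $\param W(t)$ are within $O(t/\mathrm{gap})$ of those of $R_0Y^\top$ and $R_0F^\top$, provided the singular value gaps are nondegenerate. Both matrices share the left factor $R_0$, so their column spaces both lie in $\mathrm{col}(R_0)$. If the top singular vectors of $R_0Y^\top$ and $R_0F^\top$ coincide, then $\param h$ and $\param W$ acquire the same dominant subspace, which is the claimed collapse. This coincidence is natural when $Y$ and $F$ are correlated, e.g. when the skip term is small or $F$ is close to aligned with $x$.

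The main obstacle is making ``early in training'' rigorous. We must control the $O(t^2)$ cross-terms, in particular the coupling term $\param W F$ inside $\nabla_{\param h}$ and $\param h^\top R$ inside $\nabla_{\param W}$. We must also handle the factored (LoRA) case, where the first-order dynamics depend on the random factor $A$. Our first approach is to bound these terms uniformly on a short interval $t \le t_0$ using Lipschitz bounds on the gradient, and to state the result in terms of the leading-order direction. If needed, we would treat only the unfactored case exactly and present the LoRA case as following from the same leading-order argument.
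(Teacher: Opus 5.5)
Your proposal is correct and follows essentially the same route as the paper: you compute the exact gradients $-R(Y+\param W F)^\top$ and $-(I+\param h)^\top RF^\top$, observe that near zero initialization they reduce to $-RY^\top$ and $-RF^\top$, and conclude that early on $\param h \approx tRY^\top$ and $\param W \approx tRF^\top$; your Wedin $\sin\Theta$ step just makes the paper's ``up to higher order error'' precise. The one slip is in your optional LoRA aside: for $A\in\sR^{r\times d_{\mathrm{mlp}}}$ with $r\ll d_{\mathrm{mlp}}$, the matrix $A^\top A$ has rank $r$ and is far from isotropic (only $AA^\top$ is), so that case would need a randomized-range-finder-type argument relying on spectral decay of $RF^\top$, but since the statement concerns unfactored $\param h,\param W$ this does not affect your proof.
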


\begin{proof}[Proof (sketch)]
    When $\param h, \param W \approx 0$, $y'$ can is approximately
    \[y'(x) \approx x + WF(x) + (\param h)(x + WF(x)) + (\param W)F(x),\]
    or, with the above matrix notation,
    \[Y'(X) \approx Y + (\param h)Y + (\param W) F.\]
    Additionally, the gradient flow dynamics with the MSE loss can be approximated as
    \[
    \Dot{\param h} \approx RY^\top \quad \Dot{\param W} \approx RF^\top.
    \]
    So, up to higher order error, $\param h$ and $\param W$ will simply be $\alpha RY^\top$ and $\alpha RF^\top$ for some $\alpha$. Since these left singular vectors are aligned, the dominant directions these parameters learn to match are the same.
\end{proof}

The relationship between the left-singular vectors of $RY^\top$ and $RF^\top$ is not guaranteed. In fact, since $F$ is an arbitrary function, these two can have incredibly dissimilar dominant singular vector spaces. However, for real data such as Winogrande \citep{sakaguchi2021winogrande}, Figure \ref{fig:result_joint_features} shows the inner products for the top 50 singular vectors of these matrices at three layers of the Llama model being trained.

Notably, these figures are nearly diagonal, meaning that these top left-singular subspaces are highly aligned. This leads to an important problem: steering and fine-tuning are going to approximate the same update, at least with only small updates. This, of course, is undesirable, since learning different subspaces would lead to more dimensions of the residual matrix $R$ being learned.



\section{Principled Steering and Joint Adaptation}
\label{sec:algorithm}
Building on the theoretical framework established in Section \ref{sec:unifying_framework} and \ref{sec:joint_motivation}, we now translate our analysis of weight-activation equivalence (and their differences) \textbf{into actionable algorithmic innovations}. First, our derivation of the mapping between spaces identifies the most expressive intervention site within the Transformer block, leading to the \textbf{post-block steering} method. Second, the functional differences observed between weight and activation updates motivates a \textbf{joint adaptation} regime. In the following subsections, we detail the implementation of these techniques.

\subsection{Post-Block Steering}
\label{sec:post_block_steering}

We parameterize our intervention as a bottleneck adapter:
\begin{align*}
    h &\to h +  W_2 \phi(W_1 h),
\end{align*}
where $W_1 \in \mathbb{R}^{r \times d}$ and $W_2 \in \mathbb{R}^{d \times r}$ project the hidden state $h$ into a low-dimensional subspace of rank $r$ and project it back to the model's dimension. We evaluate both linear and non-linear variants, where $\phi$ is either the identity function or a non-linear activation (e.g., SiLU). While structurally similar to ReFT \citep{wu2024reft},  we introduce three principled improvements derived from Section \ref{sec:unifying_framework}:
\begin{itemize}
    \item \textbf{Theoretical Foundation of the Locus:} While ReFT identifies the residual stream as a high-performance intervention site through empirical selection, our framework provides a formal theoretical foundation for this choice (Section \ref{sec:first_order}).

    \item \textbf{Global Scope:} Our analysis establishes that weight and activation updates equivalence requires global modification, while ReFT treats intervention layers as discrete hyperparameters requiring empirical search. Consequently, we apply the intervention at \textit{every} model layer by default.

    \item \textbf{Bottleneck Adapter Parameterization:} ReFT focuses on a specific low-rank linear parameterizations with constraints such as orthonormal rows. We employ a more flexible bottleneck adapter without these constraints and permits non-linear transformations.
\end{itemize}

\subsection{Orthogonality-Constrained Joint Adaptation}
\label{sec:orthogonality}
Given the tendency for weight and activation updates to converge into the same subspace early in training—resulting in functional redundancy (Section \ref{section:result_dual})—we enforce an orthogonality constraint between the output spaces of the steering intervention and the weight update to prevent this collapse.

Specifically, we define $W_2$ as the output projector for the steering adapter and $B$ as the output projector for the weight update (e.g., the LoRA $B$ matrix). We compute $V$, the orthogonal basis for the column space of $B$, and project $W_2$ onto the orthogonal complement:
\[W_2 \mapsto (I - VV^\top)W_2.\]
This operation ensures that $W_2$ and $B$ operate in strictly orthogonal subspaces, forcing the activation updates to learn features complementary to the weight updates.

\begin{table*}[t!]
\scriptsize
\caption{Comparison against SFT. $\Delta$ is performance difference relative to SFT. LoRA, ReFT and Ours all use rank 8}
\label{tab:exp_main_highcap}
\centering
\setlength\tabcolsep{4pt} 
\begin{tabular}{llc|cc|cc|cc|cc|cc|cc|c}
    \toprule
    Model & Method & Params (\%) 
          & BoolQ & $\Delta$ 
          & WinoG & $\Delta$ 
          & ARC-C & $\Delta$ 
          & GSM8K & $\Delta$ 
          & AQuA & $\Delta$ 
          & ListOps & $\Delta$ 
          & Avg. \\
    \toprule \toprule
    \multirow{4}{*}{Llama-3.2-1B}
    & SFT
        & {100\%}
        & 88.2 & \cellcolor{green!10}{0.0}
        & 58.2 & \cellcolor{green!10}{0.0}
        & 60.3 & \cellcolor{green!10}{0.0}
        & 32.2 & \cellcolor{green!10}{0.0}
        & 36.2 & \cellcolor{green!10}{0.0}
        & 66.2 & \cellcolor{green!10}{0.0}
        & \cellcolor{green!10}{0.0} \\
        
    & LoRA 
        & {0.45\% }
        & 84.7 & \cellcolor{green!10}{-3.5}
        & 57.0 & \cellcolor{green!10}{\underline{-1.2}}
        & 61.6 & \cellcolor{green!10}{\textbf{+1.3}}
        & 31.8 & \cellcolor{green!10}{\textbf{-0.4}}
        & 33.6 & \cellcolor{green!10}{-2.9}
        & 60.0 & \cellcolor{green!10}{-6.2}
        & \cellcolor{green!10}{\underline{-2.2}} \\
        
    & ReFT 
        & {0.04\% }
        & 84.1 & \cellcolor{green!10}{-4.1}
        & 49.3 & \cellcolor{green!10}{-8.9}
        & 57.0 & \cellcolor{green!10}{-2.7}
        & 31.6 & \cellcolor{green!10}{\underline{-0.6}}
        & 30.2 & \cellcolor{green!10}{\underline{-6.0}}
        & 49.3 & \cellcolor{green!10}{-16.9}
        & \cellcolor{green!10}{-6.5} \\
    \cmidrule(lr){2-16}
        
    & Ours 
        & {0.04\% }
        & 86.2 & \cellcolor{green!10}{\textbf{-2.0}}
        & 60.1 & \cellcolor{green!10}{\textbf{+1.9}}
        & 60.3 & \cellcolor{green!10}{0.0}
        & 31.5 & \cellcolor{green!10}{-0.7}
        & 36.5 & \cellcolor{green!10}{\textbf{+0.3}}
        & 65.4 & \cellcolor{green!10}{\textbf{-0.8}}
        & \cellcolor{green!10}{\textbf{-0.2}} \\
    \midrule \midrule
    \multirow{4}{*}{gemma-3-1b}
     & SFT
        & {100\%}
        & 83.3 & \cellcolor{green!10}{0.0} 
        & 51.4 & \cellcolor{green!10}{0.0}
        & 48.2 & \cellcolor{green!10}{0.0}
        & 23.4 & \cellcolor{green!10}{0.0}
        & 32.7 & \cellcolor{green!10}{0.0}
        & 65.8 & \cellcolor{green!10}{0.0}
        & \cellcolor{green!10}{0.0} \\
     
     & LoRA 
        &  {0.45\%}
        & 84.7 & \cellcolor{green!10}{\textbf{+1.4}} 
        & 51.6 & \cellcolor{green!10}{\textbf{+0.2}} 
        & 50.6 & \cellcolor{green!10}{\textbf{+2.4}}
        & 22.6 & \cellcolor{green!10}{\textbf{-0.8}}
        & 31.6 & \cellcolor{green!10}{\underline{-1.1}}
        & 53.6 & \cellcolor{green!10}{-12.2}
        & \cellcolor{green!10}{\underline{-1.7}} \\
     
     & ReFT 
        & {0.04\%}
        & 76.2 & \cellcolor{green!10}{-7.1} 
        & 49.2 & \cellcolor{green!10}{-2.2}
        & 27.8 & \cellcolor{green!10}{-20.4}
        & 11.6 & \cellcolor{green!10}{-11.8}
        & 24.6 & \cellcolor{green!10}{-8.1}
        & 50.3 & \cellcolor{green!10}{-15.5}
        & \cellcolor{green!10}{-10.9} \\
     \cmidrule(lr){2-16}
     
     & Ours
        & {0.04\%}
        & 82.0 & \cellcolor{green!10}{\underline{-1.3}} 
        & 50.8 & \cellcolor{green!10}{-0.6} 
        & 48.9 & \cellcolor{green!10}{+0.7} 
        & 21.6 & \cellcolor{green!10}{\underline{-1.8}} 
        & 32.1 & \cellcolor{green!10}{\textbf{-0.6}}
        & 65.2 & \cellcolor{green!10}{\textbf{-0.6}}
        & \cellcolor{green!10}{\textbf{-0.7}} \\
    \midrule \midrule
    \multirow{4}{*}{Qwen 3 4B}
    & SFT
        & {100\%}
        & 91.4 & \cellcolor{green!10}{0.0}
        & 83.0 & \cellcolor{green!10}{0.0}
        & 88.4 & \cellcolor{green!10}{0.0}
        & 37.0 & \cellcolor{green!10}{0.0}
        & 64.8 & \cellcolor{green!10}{0.0}
        & 77.1 & \cellcolor{green!10}{0.0}
        & \cellcolor{green!10}{0.0} \\
        
    & LoRA 
        & {0.41\%}
        & 91.0 & \cellcolor{green!10}{\textbf{-0.4}}
        & 84.6 & \cellcolor{green!10}{\textbf{+1.6}}
        & 88.8 & \cellcolor{green!10}{\textbf{+0.4}}
        & 37.6 & \cellcolor{green!10}{\underline{+0.6}}
        & 66.7 & \cellcolor{green!10}{\underline{+1.9}}
        & 78.0 & \cellcolor{green!10}{\textbf{+0.9}}
        & \cellcolor{green!10}{\textbf{+0.8}} \\
        
    & ReFT 
        & {0.04\%}
        & 90.5 & \cellcolor{green!10}{-0.9}
        & 64.9 & \cellcolor{green!10}{-18.1}
        & 84.7 & \cellcolor{green!10}{-3.7}
        & 37.7 & \cellcolor{green!10}{\textbf{+0.7}}
        & 66.8 & \cellcolor{green!10}{\textbf{+2.0}}
        & 68.7 & \cellcolor{green!10}{-8.4}
        & \cellcolor{green!10}{-4.7} \\
    \cmidrule(lr){2-16}
        
    & Ours
        &  {0.04\% }
        & 90.7 & \cellcolor{green!10}{\underline{-0.7}}
        & 80.6 & \cellcolor{green!10}{\underline{-2.4}}
        & 88.6 & \cellcolor{green!10}{+0.2}
        & 37.4 & \cellcolor{green!10}{+0.4}
        & 65.0 & \cellcolor{green!10}{+0.2}
        & 74.0 & \cellcolor{green!10}{\underline{-3.1}}
        & \cellcolor{green!10}{\underline{-0.9}} \\

    \midrule \midrule
    \multirow{4}{*}{Llama-3.1-8B}
    & SFT
        & {100\%}
        & 91.6 & \cellcolor{green!10}{0.0}
        & 86.4 & \cellcolor{green!10}{0.0}
        & 80.6 & \cellcolor{green!10}{0.0}
        & 44.0 & \cellcolor{green!10}{0.0}
        & 47.7 & \cellcolor{green!10}{0.0}
        & 67.2 & \cellcolor{green!10}{0.0}
        & \cellcolor{green!10}{0.0} \\
    & LoRA 
        & {0.26\% }
        & 92.3 & \cellcolor{green!10}{\textbf{+0.7}}
        & 88.8 & \cellcolor{green!10}{\textbf{+2.4}}
        & 80.4 & \cellcolor{green!10}{\textbf{-0.2}}
        & 43.8 & \cellcolor{green!10}{\textbf{-0.2}}
        & 45.4 & \cellcolor{green!10}{-2.3}
        & 67.6 & \cellcolor{green!10}{+0.4}
        & \cellcolor{green!10}{\underline{+0.1}} \\
        
    & ReFT 
        & {0.02\% }
        & 91.3 & \cellcolor{green!10}{-0.3}
        & 82.1 & \cellcolor{green!10}{-4.5}
        & 77.9  & \cellcolor{green!10}{-2.7}
        & 40.1 & \cellcolor{green!10}{-3.9}
        & 45.4 & \cellcolor{green!10}{-2.3}
        & 64.4 & \cellcolor{green!10}{-2.8}
        & \cellcolor{green!10}{-2.8} \\
    \cmidrule(lr){2-16}
        
    & Ours 
        & {0.02\% }
        & 92.3 & \cellcolor{green!10}{\textbf{+0.7}}
        & 87.2 & \cellcolor{green!10}{\underline{+0.8}}
        & 80.4 & \cellcolor{green!10}{\textbf{-0.2}}
        & 43.4 & \cellcolor{green!10}{\underline{-0.4}}
        & 47.6 & \cellcolor{green!10}{\textbf{+0.1}}
        & 69.1 & \cellcolor{green!10}{\textbf{+1.9}}
        & \cellcolor{green!10}{\textbf{+0.5}} \\
    \bottomrule
\end{tabular}
\end{table*}

\begin{table*}[t!]
\scriptsize
\caption{Comparison with methods with tiny parameter budgets. $\Delta$ is performance difference relative to SFT.}
\label{tab:exp_main_lowcap}
\centering
\setlength\tabcolsep{4pt}
\begin{tabular}{llc|cc|cc|cc|cc|cc|cc|c}
    \toprule
    Model & Method & Params (\%) 
          & BoolQ & $\Delta$ 
          & WinoG & $\Delta$ 
          & ARC-C & $\Delta$ 
          & GSM8K & $\Delta$ 
          & AQuA & $\Delta$ 
          & ListOps & $\Delta$ 
          & Avg. \\
    \toprule \toprule
    \multirow{4}{*}{Llama-3.2-1B}
    & LoFIT 
        & {0.003\% }
        & 83.0 & \cellcolor{green!10}{-5.2}
        & 49.2 & \cellcolor{green!10}{-9.0}
        & 46.3 & \cellcolor{green!10}{-20.0}
        & 28.2 & \cellcolor{green!10}{-4.0}
        & 32.5 & \cellcolor{green!10}{-3.7}
        & 64.1 & \cellcolor{green!10}{-2.1} 
        & \cellcolor{green!10}{-7.3} \\

    & JoLA 
        & {0.007\% }
        & 83.6 & \cellcolor{green!10}{-4.6}
        & 48.7 & \cellcolor{green!10}{-9.5}
        & 52.5 & \cellcolor{green!10}{-7.8}
        & 28.0 & \cellcolor{green!10}{-4.2}
        & 31.1 & \cellcolor{green!10}{-5.1}
        & 62.2 & \cellcolor{green!10}{-4.0} 
        & \cellcolor{green!10}{-5.9} \\
    \cmidrule(lr){2-16}
    & Ours (1-vec)
        & {0.003\% }
        & 86.1 & \cellcolor{green!10}{\underline{-2.1}}
        & 57.6 & \cellcolor{green!10}{\textbf{-0.6}}
        & 58.0 & \cellcolor{green!10}{\underline{-1.7}}
        & 28.6 & \cellcolor{green!10}{\underline{-3.6}}
        & 30.9 & \cellcolor{green!10}{-6.3}
        & 64.6 & \cellcolor{green!10}{\textbf{-1.6}} 
        & \cellcolor{green!10}{\textbf{-2.6}} \\

    & Ours r=1
        & {0.005\% }
        & 86.2 & \cellcolor{green!10}{\textbf{-2.0}}
        & 52.0 & \cellcolor{green!10}{\underline{-6.2}}
        & 58.8 & \cellcolor{green!10}{\textbf{-0.9}}
        & 29.2 & \cellcolor{green!10}{\textbf{-3.0}}
        & 33.6 & \cellcolor{green!10}{\textbf{-2.6}}
        & 64.4 & \cellcolor{green!10}{\underline{-1.8}} 
        & \cellcolor{green!10}{\underline{-2.8}} \\
    \midrule \midrule
    \multirow{4}{*}{gemma-3-1b}
    & LoFIT 
        & {0.003\% }
        & 75.8 & \cellcolor{green!10}{-7.5}
        & 49.6 & \cellcolor{green!10}{\underline{-1.8}}
        & 46.7 & \cellcolor{green!10}{\underline{-1.5}}
        & 15.1 & \cellcolor{green!10}{-8.3}
        & 28.4 & \cellcolor{green!10}{\textbf{-4.3}}
        & 62.3 & \cellcolor{green!10}{\underline{-3.5}} 
        & \cellcolor{green!10}{\underline{-4.5}} \\

    & JoLA 
        & {0.007\% }
        & 75.1 & \cellcolor{green!10}{-8.2}
        & 51.2 & \cellcolor{green!10}{-0.2}
        & 46.4 & \cellcolor{green!10}{-1.8}
        & 14.7 & \cellcolor{green!10}{-8.7}
        & 26.5 & \cellcolor{green!10}{-6.3}
        & 57.8 & \cellcolor{green!10}{-8.0} 
        & \cellcolor{green!10}{-5.5} \\
    \cmidrule(lr){2-16}
    & Ours (1-vec)
        & {0.003\% }
        & 74.2 & \cellcolor{green!10}{\underline{-9.1}}
        & 51.4 & \cellcolor{green!10}{0.0}
        & 44.5 & \cellcolor{green!10}{-3.7}
        & 18.0 & \cellcolor{green!10}{\underline{-5.4}}
        & 26.8 & \cellcolor{green!10}{-5.9}
        & 61.9 & \cellcolor{green!10}{-3.9} 
        & \cellcolor{green!10}{-4.7} \\

    & Ours r=1
        & {0.005\% }
        & 83.1 & \cellcolor{green!10}{\textbf{-0.2}}
        & 51.8 & \cellcolor{green!10}{\textbf{+0.4}}
        & 49.0 & \cellcolor{green!10}{\textbf{+0.8}}
        & 19.0 & \cellcolor{green!10}{\textbf{-4.4}}
        &  27.4 & \cellcolor{green!10}{\underline{-5.3}} 
        & 65.2 & \cellcolor{green!10}{\textbf{-0.6}} 
        & \cellcolor{green!10}{\textbf{-1.5}} \\
        \midrule \midrule
    \multirow{4}{*}{Qwen 3 4B}
    & LoFIT 
        & {0.003\% }
        & 90.2 & \cellcolor{green!10}{\underline{-1.2}}
        & 77.8 & \cellcolor{green!10}{-5.2}
        & 87.6 & \cellcolor{green!10}{-0.8}
        & 39.1 & \cellcolor{green!10}{\textbf{+2.1}}
        & 67.8 & \cellcolor{green!10}{\textbf{+3.0}}
        & 65.3 & \cellcolor{green!10}{\textbf{-11.8}} 
        & \cellcolor{green!10}{\textbf{-2.3}} \\

    & JoLA 
        & {0.007\% }
        & 89.6 & \cellcolor{green!10}{-1.8}
        & 75.7 & \cellcolor{green!10}{-7.3}
        & 88.1 & \cellcolor{green!10}{\underline{-0.3}}
        & 38.9 & \cellcolor{green!10}{+1.9}
        & 64.0 & \cellcolor{green!10}{-0.8}
        & 64.5 & \cellcolor{green!10}{-12.6} 
        & \cellcolor{green!10}{-3.5} \\
    \cmidrule(lr){2-16}
    & Ours (1-vec)
        & {0.003\% }
        & 90.4 & \cellcolor{green!10}{\textbf{-1.0}}
        & 79.0 & \cellcolor{green!10}{\underline{-4.0}}
        & 86.3 & \cellcolor{green!10}{-2.1}
        & 37.1 & \cellcolor{green!10}{+0.1}
        & 63.4 & \cellcolor{green!10}{-1.4}
        & 65.3 & \cellcolor{green!10}{\textbf{-11.8}}
        & \cellcolor{green!10}{-3.4} \\

    & Ours r=1
        & {0.005\% }
        & 90.0 & \cellcolor{green!10}{-1.4}
        & 80.2 & \cellcolor{green!10}{\textbf{-2.8}}
        & 88.4 & \cellcolor{green!10}{\textbf{0.0}}
        & 39.1 & \cellcolor{green!10}{\textbf{+2.1}}
        & 65.9 & \cellcolor{green!10}{\underline{+1.1}}
        & 63.6 & \cellcolor{green!10}{-13.5} 
        & \cellcolor{green!10}{\underline{-2.4}} \\
    \midrule \midrule
    \multirow{4}{*}{Llama-3.1-8B}
    & LoFIT 
        & {0.001\% }
        & 89.9 & \cellcolor{green!10}{-1.7}
        & 48.7 & \cellcolor{green!10}{-39.1}
        & 73.8 & \cellcolor{green!10}{-6.8}
        & 39.8 & \cellcolor{green!10}{-4.2}
        & 47.0 & \cellcolor{green!10}{\textbf{-0.7}}
        & 64.8 & \cellcolor{green!10}{\underline{-2.4}} 
        & \cellcolor{green!10}{-9.2} \\

    & JoLA 
        & {0.003\% }
        & 90.5 & \cellcolor{green!10}{-1.1}
        & 50.3 & \cellcolor{green!10}{-37.5}
        & 76.9 & \cellcolor{green!10}{-3.7}
        & 37.7 & \cellcolor{green!10}{-6.3}
        & 46.2 & \cellcolor{green!10}{\underline{-1.5}}
        & 63.8 & \cellcolor{green!10}{-3.4} 
        & \cellcolor{green!10}{-8.9} \\
    \cmidrule(lr){2-16}
    & Ours (1-vec)
        & {0.001\% }
        & 91.7 & \cellcolor{green!10}{\textbf{+0.1}}
        & 86.6 & \cellcolor{green!10}{\textbf{+0.2}}
        & 80.0 & \cellcolor{green!10}{\underline{-0.6}}
        & 39.7 & \cellcolor{green!10}{-4.3}
        & 44.9 & \cellcolor{green!10}{-2.8}
        & 66.4 & \cellcolor{green!10}{-2.8}
        & \cellcolor{green!10}{\underline{-1.7}} \\

    & Ours r=1
        & {0.003\% }
        & 91.3 & \cellcolor{green!10}{-0.6}
        & 85.9 & \cellcolor{green!10}{-0.5}
        & 80.3 & \cellcolor{green!10}{\textbf{-0.3}}
        & 40.7 & \cellcolor{green!10}{\textbf{-3.3}}
        & 45.6 & \cellcolor{green!10}{-2.1}
        & 68.5 & \cellcolor{green!10}{\textbf{+1.3}} 
        & \cellcolor{green!10}{\textbf{-0.9}} \\
    
    \bottomrule
\end{tabular}
\end{table*}
\begin{table}[htp!]
\centering
\small
\caption{Instruction tuning on Llama 3.1 8B with AlpacaEval 2.0 (length-controlled win rate against GPT-4 Turbo).}
\label{tab:exp_it}
\begin{tabular}{lccc}
    \toprule
    Method & Params (\%) & LC Win Rate $(\uparrow)$ & SE \\
    \midrule
    Full SFT & 100\% & 11.49 & $\pm$0.51 \\
    \midrule
    Ours (nonlinear, $r{=}16$) & 0.05\% & \textbf{11.34} & $\pm$0.48 \\
    Ours (linear, $r{=}16$) & 0.05\% & 11.00 & $\pm$0.53 \\
    LoRA ($r{=}16$) & 0.52\% & 9.59 & $\pm$0.40 \\
    LoRA ($r{=}8$) & 0.26\% & 10.52 & $\pm$0.48 \\
    ReFT ($r{=}16$, all layers) & 0.05\% & 10.96 & $\pm$0.43 \\

    ReFT ($r{=}4$, 4 layers) & 0.004\% & 9.50 & $\pm$0.49 \\
    \bottomrule
\end{tabular}
\end{table}

\begin{table}[htp!]
\centering
\small
\begin{minipage}{0.55\textwidth}
\centering
\caption{RL on DeepSeek-R1-Distill-Qwen-1.5B with GSM8K.}
\label{tab:exp_rl}
\begin{tabular}{lcc}
    \toprule
    Method & Params (\%) & Pass@1 \\
    \midrule
    Base model & - & 10.2 \\
    \midrule
    LoRA & 0.52\% & $81.5 \pm 0.7$ \\
    Ours (nonlinear) & 0.04\% & $84.7 \pm 0.5$ \\
    Ours (linear) & 0.04\% & $84.3 \pm 0.8$ \\
    \bottomrule
\end{tabular}
\end{minipage}
\end{table}
\section{Experiments}
\label{sec:experiments}
We empirically evaluate the following claims: 

\begin{itemize}

\item \textbf{Principled steering approximates full-parameter fine-tuning (Section \ref{section:result_main}).} Post-block steering performs within a $0.2\%\text{--}0.9\%$ margin of full-parameter fine-tuning (SFT) while training only $0.04\%$ model parameters.


\item \textbf{Generalization to complex training paradigms (Section \ref{section:result_it}).} The effectiveness of our approach extends to more complex optimization landscapes like instruction tuning and reinforcement learning (RL).

\item \textbf{Non-linear parameterization provides marginal utility (Section \ref{section:result_rank}).} While adding non-linearity to the activation adapters offers stable gains, the improvement is negligible across ranks and model scales.

\item \textbf{Jointly adapting both weights and activations is more expressive (Section \ref{sec:result_joint}).} Joint training in both parameter and activation spaces frequently outperforms individual adaptation methods, accessing a more functional capacity.

\item \textbf{Orthogonality is crucial for joint training (Section \ref{sec:result_joint_orth}).} We find that enforcing the orthogonality constraint (described in Section \ref{sec:orthogonality}) is necessary to prevent functional redundancy and access the benefits of joint adaptation.

\end{itemize}

\paragraph{Datasets.} We evaluate our approach on six benchmarks: three commonsense reasoning datasets, \textbf{BoolQ} \citep{clark2019boolq}, \textbf{Winogrande} \citep{sakaguchi2021winogrande}, \textbf{ARC Challenge} \citep{allenai:arc}; two mathematical reasoning, \textbf{GSM8K} \citep{cobbe2021training} and \textbf{AQuA} \citep{ling-etal-2017-program}; and one long-context task, \textbf{ListOps} from Long Range Arena \citep{tay2020long, nangia2018listops}. Prompt templates and dataset details is available in Appendix \ref{sec:appendix_datasets}.

\paragraph{Hyperparameters and Evaluation.} 

For each configuration, we select the optimal learning rate via grid search over five values using a validation set. Tables~\ref{tab:exp_main_highcap} and \ref{tab:exp_main_lowcap} report the mean accuracy of \textbf{five independent runs}; the standard deviations are provided in Appendix \ref{sec:appendix_std}, hyperparameter search space in Appendix \ref{sec:appendix_hparams}, and training setup in Appendix \ref{sec:appendix_training}.

\subsection{Approximating full-parameter SFT}
\label{section:result_main}

\paragraph{Setup.} We evaluate our approach in two regimes:
\begin{enumerate}
    \item \textbf{Performance-oriented PEFT:} We compare against \textbf{LoRA} \citep{hu2022lora}, the industry standard for weight-space adaptation, and \textbf{ReFT} \citep{wu2024reft}, a high-performance representation steering method (Table \ref{tab:exp_main_highcap}).
    \item \textbf{Ultra-Efficient Steering:} We benchmark against \textbf{LoFiT} \citep{yin2024lofit} and \textbf{JoLA} \citep{lai2025joint}, which target extreme parameter efficiency (Table \ref{tab:exp_main_lowcap}).
\end{enumerate}

\noindent The parameter budget (Params $(\%$)) is measured as the percentage of trainable parameters relative to the total model size. To ensure a fair comparison, we match the parameter counts of most baselines by adjusting our adapter rank. We allocate LoRA a larger budget ($0.45\%$ vs ours $0.04\%$) because its performance degrades significantly at lower ranks. For the variants shown in Table \ref{tab:exp_main_lowcap}, \textit{Ours (1-vec)} denotes a version of our method that trains single steering vectors instead of matrices. All steering adapters in this subsection are linear ($\phi$ is the identity function). For a fair comparison, ReFT is also applied at all layers
(matching our global intervention policy).



\paragraph{Results.}
Table~\ref{tab:exp_main_highcap} shows that our approach maintains an average performance gap within $0.2\%$--$0.9\%$ of SFT while training only $0.04\%$ parameters. On 1B-scale models, we match or outperform LoRA despite training $11\times$ fewer parameters. \textbf{Compared to ReFT at the same $0.04\%$ budget, our method is significantly more stable on complex tasks}. The performance gap is particularly pronounced on ListOps, a long-range dependency task where attention plays a critical role. While ReFT drops by up to $16.9\%$, our approach limits the gap to $3.1\%$.


\noindent Table~\ref{tab:exp_main_lowcap} shows that \textbf{our approach achieves state-of-the-art among methods with tiny parameter budgets}. With only $0.001\%$--$0.005\%$ trainable parameters, we yield the best average performance across all models, with the advantage persisting as model scale increases. While baselines remain competitive on mid-sized models like Qwen-4B, they exhibit significant instability at larger scales, collapsing to an average gap of approximately $-9\%$ on Llama-3.1-8B. In contrast, our approach remains robust, outperforming the best-performing baseline at the 8B scale by a margin of $8.0\%$.

\subsection{Generalization to more complex objectives}
\label{section:result_it}

\paragraph{Setup.}
Next, we evaluate post-block steering on instruction tuning and RL, testing whether our results generalizes from the structured reasoning tasks studied previously to open-ended generation and iterative policy optimization.

\noindent For instruction tuning, we fine-tune Llama 3.1 8B on Alpaca-Cleaned~\citep{alpaca} (${\sim}52$K pairs) for 3 epochs. We compare post-block steering against full-parameter SFT, LoRA ($r=8$ and $r=16$), and two ReFT variants: the published sparse configuration ($r=4$ at 4 layers) and a parameter-matched variant ($r=16$ at all layers). While we report the best of two learning rates for SFT and LoRA, our method uses a fixed learning rate of $2 \times 10^{-3}$. Models are evaluated on AlpacaEval 2.0~\citep{alpaca_eval} reporting length-controlled (LC) win rates against GPT-4 Turbo and standard errors (SE) across 805 instructions.

\noindent For RL, we perform Group Relative Policy Optimization (GRPO) \citep{shao2024deepseekmath} on DeepSeek-R1-Distill-Qwen-1.5B \citep{deepseekai2025deepseekr1incentivizingreasoningcapability}, comparing our post-block adapter against the standard LoRA implementation in TRL \citep{vonwerra2020trl, schulman2025lora}. Both methods use rank $r=8$.

\paragraph{Results.}
Table~\ref{tab:exp_it} summarizes the instruction tuning results. Our nonlinear post-block adapter achieves an LC win rate of 11.34\%, nearly closing the gap to full-parameter SFT (11.49\%) within 0.15\%. Both our linear (11.00\%) and nonlinear variants outperform LoRA (10.52\%), despite LoRA using more trainable parameters. This confirms that the expressivity of post-block steering extends to open-ended instruction following. 

\noindent The RL evaluation (Table~\ref{tab:exp_rl}) reveals that out post-block steering outperforms LoRA by up to 3.2\% while using 13$\times$ fewer parameters ($0.04\%$ vs. $0.52\%$). These results suggest that activation-space interventions utility generalizes from
supervised signals to the non-stationary gradients and shifting policy distributions typical of RL optimization.

\noindent We also note that the marginal gap between our linear and nonlinear variants on these complex scenarios suggests that while nonlinear parameterization offers peak performance, the linear approximation is highly robust, a trend we examine more broadly next in Section \ref{section:result_rank}.

     
     
     
    
    

    
    

    
    
    

\begin{table*}[t!]
\centering

\small 
\caption{Joint training vs. its components. ``Joint'' is naive joint training, ``Joint-Orth'' incorporates the orthogonality constraint. Joint uses LoRA (0.45\%) + Adapter (0.34\%), total 0.79\% trainable params. LoRA $r=16$ has $0.9\%$ trainable params.}
\label{tab:exp_joint_stacked}
\begin{tabular}{ll| c ccccc}
    \toprule
    Model & Dataset & SFT & LoRA (r=16) & LoRA (r=8) & Adapter & Joint & Joint-Orth \\
    \midrule
    \multirow{3}{*}{Llama-3.2-1B} & GSM8K & \textbf{32.2} & 31.2 & \underline{31.8} & 31.3 & 31.1 & 31.1 \\
    & BoolQ & 88.2 & \underline{88.4} & 84.7 & 86.2 & \underline{88.4} & \textbf{88.5} \\
    & WinoG & 58.2 & \textbf{64.3} & 57.0 & 52.4 & 56.9 & \underline{59.3} \\
    \midrule
    \multirow{3}{*}{Gemma-3-1b} & GSM8K & 23.4 & \underline{23.8} & 22.6 & 21.6 & 22.8 & \textbf{24.4} \\
    & BoolQ & 83.3 & \textbf{84.8} & \underline{84.7} & 81.4 & 84.5 & \textbf{84.8} \\
    & WinoG & \underline{51.4} & \textbf{51.8} & \underline{51.4} & 50.8 & 51.3 & 50.9 \\
    \midrule
    \multirow{3}{*}{Qwen 3 4B} & GSM8K & 37.0 & \underline{38.3} & 37.6 & 37.3 & 37.9 & \textbf{38.8} \\
    & BoolQ & \textbf{91.4} & 91.0 & 91.0 & 90.3 & 90.7 & \textbf{91.4} \\
    & WinoG & 83.0 & 83.0 & \textbf{84.6} & 82.5 & 82.7 & \underline{83.5} \\
    \bottomrule
\end{tabular}
\end{table*}



\begin{figure}[t]
    \centering
    \begin{minipage}[c]{0.48\textwidth}
        \centering
        \includegraphics[width=\linewidth]{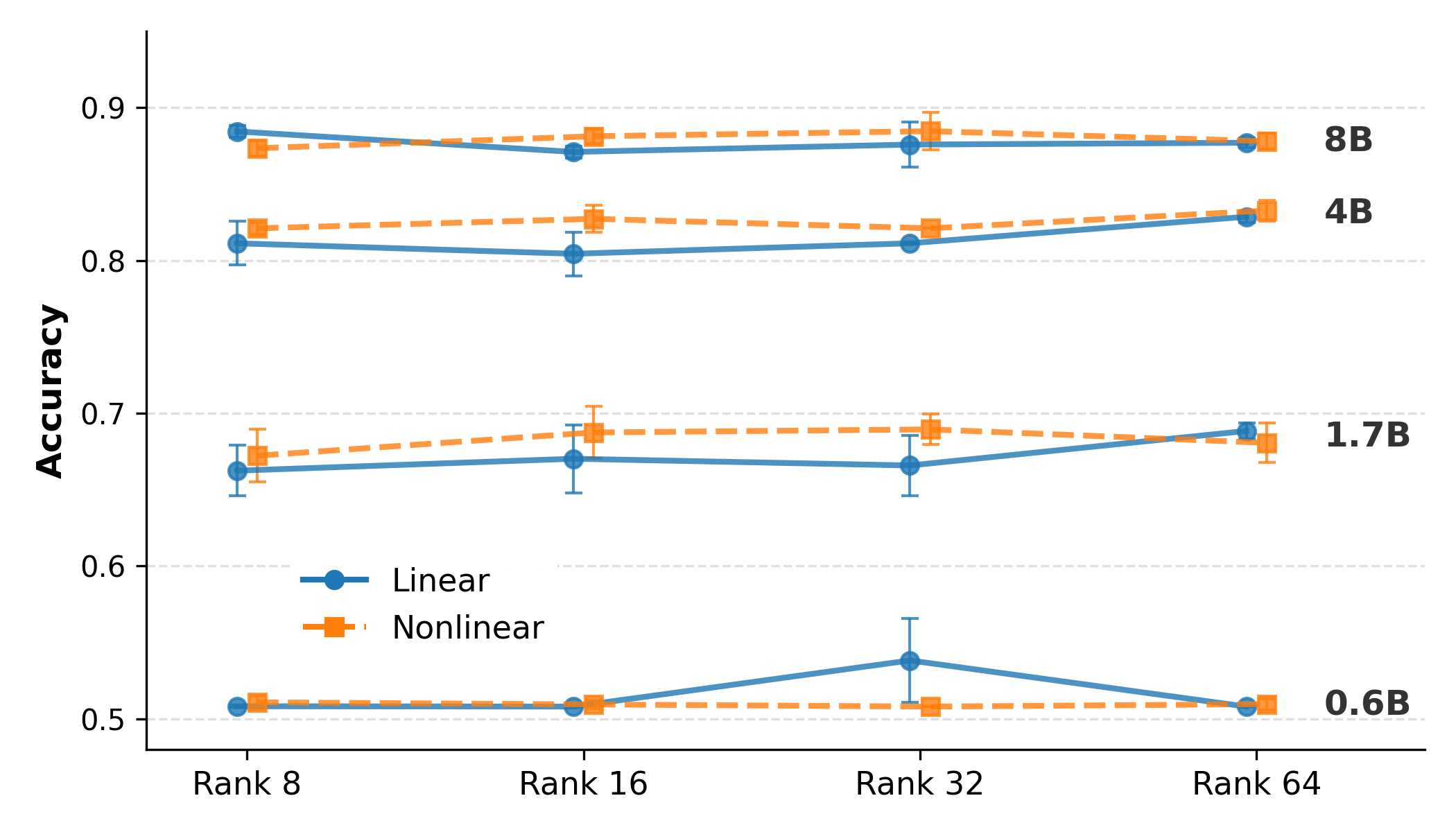}
        \caption{Effect of rank and linearity across model sizes}
        \label{fig:result_rank_nonlinearity}
    \end{minipage}
    \hfill
    \begin{minipage}[c]{0.48\textwidth}
        \centering
        \includegraphics[width=\linewidth]{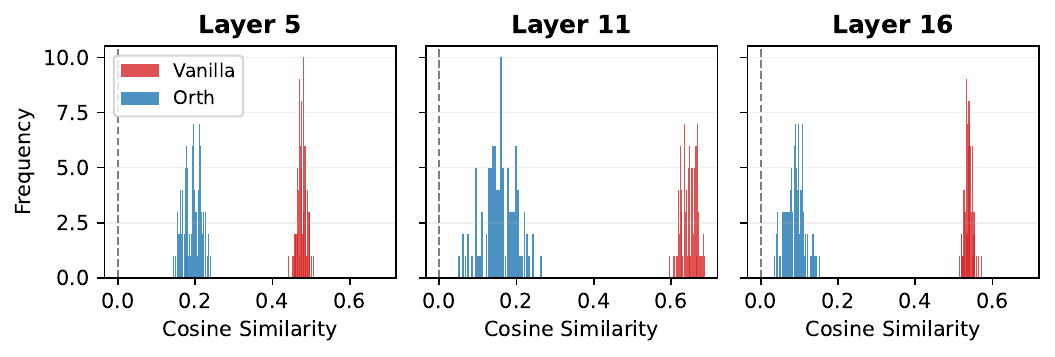}
        \caption{Distribution of cosine similarity between weight and adapter shifts. Closer to zero indicates more orthogonal solutions.}
        \label{fig:result_joint_orth_vanilla}
    \end{minipage}
\end{figure}

\subsection{Effect of nonlinearity}
\label{section:result_rank}

\paragraph{Setup.} A natural concern is whether our first-order analysis holds under practical training conditions. To test this, we examine how non-linearity interacts with increasing rank and model scale using Qwen3 \citep{yang2025qwen3} model suite (0.6B, 1.7B, 4B, and 8B) on Winogrande.

\paragraph{Results.} Figure~\ref{fig:result_rank_nonlinearity} illustrates the interaction between nonlinearity, rank, and model scale. Across models 1.7B and larger, nonlinear adapters perform comparably to their linear counterparts, and performance remains stable across ranks. This suggests that \textbf{linear shifts are largely sufficient, supporting the validity of our first-order analysis}, and that intervention site matters more than adapter capacity. At 0.6B, we observe performance degradation and increased variance at higher ranks, indicating that smaller models may be more sensitive to overparameterization.

\subsection{Joint Training is more expressive}
\label{sec:result_joint}

\paragraph{Setup.} We compare our joint training with orthogonality constraint (\textbf{Joint-Orth}) against several baselines: full-parameter \textbf{SFT}, individual \textbf{LoRA} ($r=8$ and $r=16$), an individual \textbf{Activation Adapter} ($r=64$), and naive joint training without the constraint (\textbf{Joint}). To ensure a balanced contribution from both weight and activation spaces, we set the adapter rank such that its parameter count is comparable to LoRA’s (Activation Adapter: $0.34\%$ vs. LoRA: $0.45\%$).


\paragraph{Results.} 
Table \ref{tab:exp_joint_stacked} shows that Joint-Orth often matches or exceeds the strongest individual baselines, including full-parameter SFT. Notably, Joint-Orth frequently outperforms LoRA $r=16$ despite having a smaller total parameter footprint ($0.79\%$ vs $0.90\%$), particularly on reasoning-heavy tasks like GSM8K. The value of the orthogonality constraint is apparent when compared to naive Joint training, which frequently underperforms its strongest individual component (e.g., Gemma GSM8K), providing empirical support for the functional redundancy predicted in Proposition \ref{prop:joint_collapse}. When Joint-Orth does not improve performance (e.g., WinoG on Qwen), it remains competitive with the top-performing baseline. 

\noindent In reasoning-heavy tasks (e.g., GSM8K, BoolQ), the performance boost from \textbf{Joint-Orth} suggests that weight updates and activation steering target complementary functional roles (e.g., knowledge retrieval and procedural logic). By enforcing orthogonality, we enable these spaces to decouple, allowing the model to surpass even full SFT. Conversely, for linguistic tasks like WinoGrande, weight and activation interventions likely target redundant functional paths, limiting additive gains.

\subsection{Orthogonality unlocks joint training benefits}
\label{sec:result_joint_orth}

\paragraph{Setup.} To test the effect of the orthogonality constraint, we plot the distribution of cosine similarities between the steering shift and the weight update shift. We randomly sample 100 prompts from the Winogrande dataset and evaluate on Llama-3.2-1B. For each layer, we compute (1) the activation adapter's output shift and (2) the LoRA-induced shift, measured as the difference in MLP output with and without the LoRA update. We then compute the cosine similarity between these two shifts.

\paragraph{Results.} Figure \ref{fig:result_joint_orth_vanilla} shows that the naively trained joint model (vanilla, red) shows high correlation between the activation adapter shift and the LoRA shift, confirming the learning-the-same-subspace hypothesis from Section \ref{section:result_dual}. In contrast, the orthogonally projected case (blue) shows very small cosine similarity. While it is tempting to interpret this lack of correlation as the projection nullifying the adapter's contribution, we find the 2-norm of the $W_2$ matrices range from $0.8$ to $1.4$ across layers, demonstrating that the adapter retains significant magnitude (Figure \ref{fig:norm} in Appendix \ref{appendix:adapter_orth_norm}).

\section{Conclusion}
We transitioned activation steering from a heuristic-driven black-box to a principled framework by establishing a first-order equivalence between weight-space updates and activation-space interventions. We identify post-block output as a theoretically grounded, highly expressive intervention point, achieving steering accuracy within 0.2\%--0.9\% of full-parameter fine-tuning while training only 0.04\% of parameters. We show that weight and activation updates are functionally complementary, and introduce a joint adaptation regime that surpasses the performance ceilings of either method in isolation, unlocking a new pathway for adapting large-scale models in memory-constrained environments.

\paragraph{Limitations.} Our theoretical link between weight updates and activation interventions is local and first-order, and may break when perturbations are large or higher-order effects dominate. Empirically, our gains combine a post-block locus with a global intervention policy (across layers), so current results do not fully disentangle where to intervene from how to intervene. Our evaluation is limited to a moderate set of tasks/models and primarily reports accuracy/parameter count; we do not comprehensively study latency/memory overhead or broader side effects (e.g., distribution shift).

\section{Impact Statement}
This paper presents work whose goal is to advance the field of Machine
Learning. There are many potential societal consequences of our work, none
which we feel must be specifically highlighted here.

\bibliography{references}

@article{sakaguchi2021winogrande,
  title={Winogrande: An adversarial winograd schema challenge at scale},
  author={Sakaguchi, Keisuke and Bras, Ronan Le and Bhagavatula, Chandra and Choi, Yejin},
  journal={Communications of the ACM},
  volume={64},
  number={9},
  pages={99--106},
  year={2021},
  publisher={ACM New York, NY, USA}
}

@article{clark2019boolq,
  title={Boolq: Exploring the surprising difficulty of natural yes/no questions},
  author={Clark, Christopher and Lee, Kenton and Chang, Ming-Wei and Kwiatkowski, Tom and Collins, Michael and Toutanova, Kristina},
  journal={arXiv preprint arXiv:1905.10044},
  year={2019}
}

@article{cobbe2021training,
  title={Training verifiers to solve math word problems},
  author={Cobbe, Karl and Kosaraju, Vineet and Bavarian, Mohammad and Chen, Mark and Jun, Heewoo and Kaiser, Lukasz and Plappert, Matthias and Tworek, Jerry and Hilton, Jacob and Nakano, Reiichiro and others},
  journal={arXiv preprint arXiv:2110.14168},
  year={2021}
}

@article{tay2020long,
  title={Long range arena: A benchmark for efficient transformers},
  author={Tay, Yi and Dehghani, Mostafa and Abnar, Samira and Shen, Yikang and Bahri, Dara and Pham, Philip and Rao, Jinfeng and Yang, Liu and Ruder, Sebastian and Metzler, Donald},
  journal={arXiv preprint arXiv:2011.04006},
  year={2020}
}

@inproceedings{nangia2018listops,
  title={Listops: A diagnostic dataset for latent tree learning},
  author={Nangia, Nikita and Bowman, Samuel},
  booktitle={Proceedings of the 2018 Conference of the North American Chapter of the Association for Computational Linguistics: Student Research Workshop},
  pages={92--99},
  year={2018}
}

@article{hu2022lora,
  title={Lora: Low-rank adaptation of large language models.},
  author={Hu, Edward J and Shen, Yelong and Wallis, Phillip and Allen-Zhu, Zeyuan and Li, Yuanzhi and Wang, Shean and Wang, Lu and Chen, Weizhu and others},
  journal={ICLR},
  volume={1},
  number={2},
  pages={3},
  year={2022}
}

@article{wu2024reft,
  title={Reft: Representation finetuning for language models},
  author={Wu, Zhengxuan and Arora, Aryaman and Wang, Zheng and Geiger, Atticus and Jurafsky, Dan and Manning, Christopher D and Potts, Christopher},
  journal={Advances in Neural Information Processing Systems},
  volume={37},
  pages={63908--63962},
  year={2024}
}

@article{yin2024lofit,
  title={Lofit: Localized fine-tuning on llm representations},
  author={Yin, Fangcong and Ye, Xi and Durrett, Greg},
  journal={Advances in Neural Information Processing Systems},
  volume={37},
  pages={9474--9506},
  year={2024}
}

@article{lai2025joint,
  title={Joint Localization and Activation Editing for Low-Resource Fine-Tuning},
  author={Lai, Wen and Fraser, Alexander and Titov, Ivan},
  journal={arXiv preprint arXiv:2502.01179},
  year={2025}
}

@article{yang2025qwen3,
  title={Qwen3 technical report},
  author={Yang, An and Li, Anfeng and Yang, Baosong and Zhang, Beichen and Hui, Binyuan and Zheng, Bo and Yu, Bowen and Gao, Chang and Huang, Chengen and Lv, Chenxu and others},
  journal={arXiv preprint arXiv:2505.09388},
  year={2025}
}

@book{Stewart90,
  author = {Stewart, G. W. and Sun, Ji},
  publisher = {Academic Press},
  title = {Matrix Perturbation Theory},
  year = 1990
}

@inproceedings{liu2024dora,
  title={Dora: Weight-decomposed low-rank adaptation},
  author={Liu, Shih-Yang and Wang, Chien-Yi and Yin, Hongxu and Molchanov, Pavlo and Wang, Yu-Chiang Frank and Cheng, Kwang-Ting and Chen, Min-Hung},
  booktitle={Forty-first International Conference on Machine Learning},
  year={2024}
}

@article{dettmers2023qlora,
  title={Qlora: Efficient finetuning of quantized llms},
  author={Dettmers, Tim and Pagnoni, Artidoro and Holtzman, Ari and Zettlemoyer, Luke},
  journal={Advances in neural information processing systems},
  volume={36},
  pages={10088--10115},
  year={2023}
}

@article{li2023inference,
  title={Inference-time intervention: Eliciting truthful answers from a language model},
  author={Li, Kenneth and Patel, Oam and Vi{\'e}gas, Fernanda and Pfister, Hanspeter and Wattenberg, Martin},
  journal={Advances in Neural Information Processing Systems},
  volume={36},
  pages={41451--41530},
  year={2023}
}

@inproceedings{rimsky2024steering,
  title={Steering llama 2 via contrastive activation addition},
  author={Rimsky, Nina and Gabrieli, Nick and Schulz, Julian and Tong, Meg and Hubinger, Evan and Turner, Alexander},
  booktitle={Proceedings of the 62nd Annual Meeting of the Association for Computational Linguistics (Volume 1: Long Papers)},
  pages={15504--15522},
  year={2024}
}

@article{todd2023function,
  title={Function vectors in large language models},
  author={Todd, Eric and Li, Millicent L and Sharma, Arnab Sen and Mueller, Aaron and Wallace, Byron C and Bau, David},
  journal={arXiv preprint arXiv:2310.15213},
  year={2023}
}

@article{adila2024free,
  title={Is Free Self-Alignment Possible?},
  author={Adila, Dyah and Shin, Changho and Zhang, Yijing and Sala, Frederic},
  journal={arXiv preprint arXiv:2406.03642},
  year={2024}
}

@inproceedings{dauphin2017language,
  title={Language modeling with gated convolutional networks},
  author={Dauphin, Yann N and Fan, Angela and Auli, Michael and Grangier, David},
  booktitle={International conference on machine learning},
  pages={933--941},
  year={2017},
  organization={PMLR}
}

@article{shazeer2020glu,
  title={Glu variants improve transformer},
  author={Shazeer, Noam},
  journal={arXiv preprint arXiv:2002.05202},
  year={2020}
}

@article{dherin2025learning,
  title={Learning without training: The implicit dynamics of in-context learning},
  author={Dherin, Benoit and Munn, Michael and Mazzawi, Hanna and Wunder, Michael and Gonzalvo, Javier},
  journal={arXiv preprint arXiv:2507.16003},
  year={2025}
}

@inproceedings{dai-etal-2023-gpt,
    title = "Why Can {GPT} Learn In-Context? Language Models Secretly Perform Gradient Descent as Meta-Optimizers",
    author = "Dai, Damai  and
      Sun, Yutao  and
      Dong, Li  and
      Hao, Yaru  and
      Ma, Shuming  and
      Sui, Zhifang  and
      Wei, Furu",
    editor = "Rogers, Anna  and
      Boyd-Graber, Jordan  and
      Okazaki, Naoaki",
    booktitle = "Findings of the Association for Computational Linguistics: ACL 2023",
    month = jul,
    year = "2023",
    address = "Toronto, Canada",
    publisher = "Association for Computational Linguistics",
    url = "https://aclanthology.org/2023.findings-acl.247/",
    doi = "10.18653/v1/2023.findings-acl.247",
    pages = "4005--4019"
}

@inproceedings{von2023transformers,
  title={Transformers learn in-context by gradient descent},
  author={Von Oswald, Johannes and Niklasson, Eyvind and Randazzo, Ettore and Sacramento, Jo{\~a}o and Mordvintsev, Alexander and Zhmoginov, Andrey and Vladymyrov, Max},
  booktitle={International Conference on Machine Learning},
  pages={35151--35174},
  year={2023},
  organization={PMLR}
}

@article{zhang2024trained,
  title={Trained transformers learn linear models in-context},
  author={Zhang, Ruiqi and Frei, Spencer and Bartlett, Peter L},
  journal={Journal of Machine Learning Research},
  volume={25},
  number={49},
  pages={1--55},
  year={2024}
}

@article{bigelow2025belief,
  title={Belief Dynamics Reveal the Dual Nature of In-Context Learning and Activation Steering},
  author={Bigelow, Eric and Wurgaft, Daniel and Wang, YingQiao and Goodman, Noah and Ullman, Tomer and Tanaka, Hidenori and Lubana, Ekdeep Singh},
  journal={arXiv preprint arXiv:2511.00617},
  year={2025}
}

@inproceedings{Loshchilov2017DecoupledWD,
  title={Decoupled Weight Decay Regularization},
  author={Ilya Loshchilov and Frank Hutter},
  booktitle={International Conference on Learning Representations},
  year={2017},
  url={https://api.semanticscholar.org/CorpusID:53592270}
}

@inproceedings{houlsby2019parameter,
  title={Parameter-efficient transfer learning for NLP},
  author={Houlsby, Neil and Giurgiu, Andrei and Jastrzebski, Stanislaw and Morrone, Bruna and De Laroussilhe, Quentin and Gesmundo, Andrea and Attariyan, Mona and Gelly, Sylvain},
  booktitle={International conference on machine learning},
  pages={2790--2799},
  year={2019},
  organization={PMLR}
}

@article{allenai:arc,
      author    = {Peter Clark  and Isaac Cowhey and Oren Etzioni and Tushar Khot and
                    Ashish Sabharwal and Carissa Schoenick and Oyvind Tafjord},
      title     = {Think you have Solved Question Answering? Try ARC, the AI2 Reasoning Challenge},
      journal   = {arXiv:1803.05457v1},
      year      = {2018},
}

@inproceedings{ling-etal-2017-program,
    title = "Program Induction by Rationale Generation: Learning to Solve and Explain Algebraic Word Problems",
    author = "Ling, Wang  and
      Yogatama, Dani  and
      Dyer, Chris  and
      Blunsom, Phil",
    editor = "Barzilay, Regina  and
      Kan, Min-Yen",
    booktitle = "Proceedings of the 55th Annual Meeting of the Association for Computational Linguistics (Volume 1: Long Papers)",
    month = jul,
    year = "2017",
    address = "Vancouver, Canada",
    publisher = "Association for Computational Linguistics",
    url = "https://aclanthology.org/P17-1015/",
    doi = "10.18653/v1/P17-1015",
    pages = "158--167"
}

@misc{alpaca,
  author = {Rohan Taori and Ishaan Gulrajani and Tianyi Zhang and Yann Dubois and Xuechen Li and Carlos Guestrin and Percy Liang and Tatsunori B. Hashimoto },
  title = {Stanford Alpaca: An Instruction-following LLaMA model},
  year = {2023},
  publisher = {GitHub},
  journal = {GitHub repository},
  howpublished = {\url{https://github.com/tatsu-lab/stanford_alpaca}},
}

@misc{alpaca_eval,
  author = {Xuechen Li and Tianyi Zhang and Yann Dubois and Rohan Taori and Ishaan Gulrajani and Carlos Guestrin and Percy Liang and Tatsunori B. Hashimoto },
  title = {AlpacaEval: An Automatic Evaluator of Instruction-following Models},
  year = {2023},
  month = {5},
  publisher = {GitHub},
  journal = {GitHub repository},
  howpublished = {\url{https://github.com/tatsu-lab/alpaca_eval}}
}

@article{shao2024deepseekmath,
  title={Deepseekmath: Pushing the limits of mathematical reasoning in open language models},
  author={Shao, Zhihong and Wang, Peiyi and Zhu, Qihao and Xu, Runxin and Song, Junxiao and Bi, Xiao and Zhang, Haowei and Zhang, Mingchuan and Li, YK and Wu, Yang and others},
  journal={arXiv preprint arXiv:2402.03300},
  year={2024}
}

@article{schulman2025lora,  
    title        = {{LoRA Without Regret}},  
    author       = {John Schulman and Thinking Machines Lab},  
    year         = 2025,  
    journal      = {Thinking Machines Lab: Connectionism},  
    doi          = {10.64434/tml.20250929},  
    note         = {https://thinkingmachines.ai/blog/lora/}  
}

@misc{vonwerra2020trl,
  title   = {{TRL: Transformers Reinforcement Learning}},
  author  = {von Werra, Leandro and Belkada, Younes and Tunstall, Lewis and Beeching, Edward and Thrush, Tristan and Lambert, Nathan and Huang, Shengyi and Rasul, Kashif and Gallouédec, Quentin},
  license = {Apache-2.0},
  url     = {https://github.com/huggingface/trl},
  year    = {2020}
}

@misc{deepseekai2025deepseekr1incentivizingreasoningcapability,
      title={DeepSeek-R1: Incentivizing Reasoning Capability in LLMs via Reinforcement Learning}, 
      author={DeepSeek-AI},
      year={2025},
      eprint={2501.12948},
      archivePrefix={arXiv},
      primaryClass={cs.CL},
      url={https://arxiv.org/abs/2501.12948}, 
}

\newpage
\appendix

\section{Notation and Shapes}

\begin{table}[h]
    \centering
    \begin{tabular}{|c|c|}
        \hline
        Symbol & Meaning \\
        \hline \hline
        $\delta A$ & A learned parameter with name $A$ \\
        $\Delta A$ & A small change in $A$ \\
        \hline
        $h$ & The input to the MLP block \\
        $d_{\text{model}}$ & The embedding dimension of the transformer \\
        $d_{\text{mlp}}$ & The hidden dimension of each MLP layer \\
        \hline
        $X$ & Data matrix \\
        $Y$ & Base MLP output \\
        $G$ & Target module output \\
        $\phi(\cdot)$ & Some non-linearity \\
        \hline
        $\cC(\cdot)$ & The column space operator \\
        $\cR(\cdot)$ & The row space operator \\
        \hline
        pre-MLP & Updates to the input of the MLP \\
        post-MLP & Updates to the output of the MLP \\
        post-block & Updates to the residual stream, after the skip-connection \\
        \hline
    \end{tabular}
    \caption{Notation}
    \label{tab:placeholder}
\end{table}

Let $h\in\mathbb{R}^{d_{\text{model}}}$ denote the input to the MLP block
(e.g., the post-attention hidden at a given token). We write
\[
a = W_1 h \in \mathbb{R}^{d_{\text{mlp}}},\qquad
m = \phi(a) \in \mathbb{R}^{d_{\text{mlp}}},\qquad
y = W_2 m \in \mathbb{R}^{d_{\text{model}}},
\]
with weight matrices
\[
W_1 \in \mathbb{R}^{d_{\text{mlp}}\times d_{\text{model}}},\qquad
W_2 \in \mathbb{R}^{d_{\text{model}}\times d_{\text{mlp}}}.
\]
The nonlinearity $\phi$ acts elementwise; we denote its elementwise derivative
by $\phi'(a)$ and write $\mathrm{Diag}(\phi'(a))$ for the diagonal matrix
whose diagonal is $\phi'(a)$.

During fine-tuning we obtain updated weights
$W_1+\param W_1$ and $W_2+\param W_2$.
We define the \emph{fine-tuning effect} at $h$ as
\[
t(h)\;\stackrel{\text{def}}{=}\;
y_{\FT}(h) - y(h),
\qquad
y_{\FT}(h) \;=\; (W_2+\Delta W_2)\,\phi\!\big((W_1+\Delta W_1)h\big).
\]
We derive $t(h)$ to first order in $(\Delta W_1,\Delta W_2)$.

\section{Gradient Derivations}

\subsection{Simple MLP (no gating)}
Starting from $y(h)=W_2\,\phi(W_1 h)$, define
\[
a = W_1 h,\qquad \smchange a = W_1 \smchange h.
\]
A first-order Taylor expansion of $\phi$ around $a$ yields
\[
\phi(a + \smchange a)
= \phi(a) + \mathrm{Diag}\big(\phi'(a)\big)\,\smchange a
\;+\; \mathcal{O}\big(\|\smchange a\|^2\big).
\]
Thus
\begin{align*}
y(h+\smchange h)
&= W_2\,\phi\big( W_1(h+\smchange h)\big)
= W_2\,\phi(a+\smchange a)
\\
&= W_2\Big[\phi(a) + \mathrm{Diag}\big(\phi'(a)\big)\,\smchange a\Big]
\;+\; \mathcal{O}\big(\|\smchange a\|^2\big).
\end{align*}
Subtracting $y(h)=W_2\phi(a)$ and substituting $\smchange a=W_1\delta h$, we obtain
\[
\boxed{
\smchange y_{\text{steer}}(h)
\;\approx\;
W_2\,\mathrm{Diag}\big(\phi'(a)\big)\,W_1\,\smchange h
\;=\;
A(h)\,\smchange h,
}
\]
where the \emph{pre-MLP Jacobian} is
\[
\boxed{
A(h) \;=\; W_2\,\mathrm{Diag}\big(\phi'(W_1 h)\big)\,W_1
\;\in\; \mathbb{R}^{d_{\text{model}}\times d_{\text{model}}}.
}
\]
All neglected terms are $\mathcal{O}(\|\delta h\|^2)$ (second order).

\subsection*{GLU (gated) MLP}
For GLU blocks (as in LLaMA/Gemma/Qwen),
\[
a_g = W_g h,\qquad
a_u = W_u h,\qquad
m = \phi(a_g)\odot a_u,\qquad
y = W_d m,
\]
with
\[
W_g,W_u \in \mathbb{R}^{d_{\text{mlp}}\times d_{\text{model}}},\qquad
W_d \in \mathbb{R}^{d_{\text{model}}\times d_{\text{mlp}}}.
\]
Under $h\mapsto h+\delta h$,
\[
\smchange a_g = W_g \smchange h,\qquad
\smchange a_u = W_u \smchange h.
\]
A first-order Taylor expansion of $m=\phi(a_g)\odot a_u$ gives
\[
\smchange m
\;=\;
\big(\phi'(a_g)\odot a_u\big)\odot \smchange a_g
\;+\;
\phi(a_g)\odot \smchange a_u
\;+\; \mathcal{O}\big(\|\smchange h\|^2\big).
\]
Therefore
\begin{align*}
\smchange y_{\text{steer}}(h)
&= W_d\,\smchange m
\\
&\approx
W_d\Big[
\underbrace{\big(\phi'(a_g)\odot a_u\big)\odot (W_g \smchange h)}_{\text{gate path}}
\;+\;
\underbrace{\phi(a_g)\odot (W_u \smchange h)}_{\text{value path}}
\Big].
\end{align*}
Equivalently, factor as a linear map in $\smchange h$:
\[
\boxed{
\smchange y_{\text{steer}}(h)
\;\approx\;
A_{\mathrm{GLU}}(h)\,\smchange h,
\quad
A_{\mathrm{GLU}}(h)
=
W_d\Big[
\mathrm{Diag}\big(\phi(a_g)\big)\,W_u
\;+\;
\mathrm{Diag}\big(a_u\odot \phi'(a_g)\big)\,W_g
\Big].
}
\]
All neglected terms are $\mathcal{O}(\|\smchange h\|^2)$.

\subsection*{LayerNorm (optional)}
If a (pre-)LayerNorm precedes the MLP, write $h=\LN(\tilde h)$ with Jacobian
$J_{\LN}(\tilde h)$. A pre-MLP steering perturbation in $\tilde h$ yields
\[
\smchange y_{\text{steer}}(\tilde h)
\;\approx\;
A\big(\LN(\tilde h)\big)\,J_{\LN}(\tilde h)\,\smchange \tilde h
\quad\text{(simple MLP),}
\]
or
\[
\smchange y_{\text{steer}}(\tilde h)
\;\approx\;
A_{\GLU}\big(\LN(\tilde h)\big)\,J_{\LN}(\tilde h)\,\smchange \tilde h
\quad\text{(GLU).}
\]


\section*{Second-order remainder}
All formulas above are first-order in $\smchange h$. The neglected terms scale as $\mathcal{O}(\|\smchange h\|^2)$,
so keeping $\|\smchange h\|$ small preserves the validity of the linear approximation.
In practice, this aligns with using small intervention scales and/or regularizers that keep edits in the
first-order regime.

\paragraph{Elementwise (Hadamard) view.}
Equivalently,
\[
\smchange y_{\text{steer}}(h)
\;\approx\;
W_d\!\left[
\big(\phi'(a_g)\odot a_u\big)\odot (W_g\,\delta h)
\;+\;
\phi(a_g)\odot (W_u\,\delta h)
\right].
\]

\paragraph{Shapes.}
$D_g,D_u\in\mathbb{R}^{d_{\text{mlp}}\times d_{\text{mlp}}}$ are diagonal row-scalers; $J_m(h)\in\mathbb{R}^{d_{\text{mlp}}\times d_{\text{model}}}$; $A(h)\in\mathbb{R}^{d_{\text{model}}\times d_{\text{model}}}$; and $\smchange y_{\text{steer}}(h)\in\mathbb{R}^{d_{\text{model}}}$.

\section{Expressivity}

\subsection{The Specific Location}
\label{app:specific_location}

Other methods steer the outputs of the MLP or the attention modules before the skip-connections are added. In notation, for a steering vector $\param h$, 
\[y(h) = h + \Attn(h) + (\GLU(h + \Attn(h)) + \param h)\]
in contrast to the (minorly different) form
\[y(h) = h + \Attn(h) + \GLU(h + \Attn(h)) + \param h\]
In principle, these can be identical for a completely freely parameterized $\param h$. However, there is an important difference which is that $\param h$ can only depend on parts of the model and not others. Post MLP methods location has $\param h (\GLU(h + \Attn(h)))$ while our method has $\param h(y(h))$. To investigate this further in a clean way, take both steering vectors as a linear update from the existing hidden state, i.e.
\[\param h_{p} = A_p h\]
\[\param h_{\steer} = A_\steer h\]

\begin{proposition}
    Let $\cD = \{h_i\}_{i \in \cI}$ be a collection of hidden states, and let the following be linear subspaces of functions of $\cD$:
    \[A = \mathrm{span}\{h + \Attn(h) | h \in \cD\}\]
    \[B = \mathrm{span}\{\GLU(h + \Attn(h))| h \in \cD\}\]
    Note that these do not necessarily have to coincide, based on the the down projection $W_d$ in the $\GLU$. In fact, assume that $A \cap B = \{0\}$, and denote the projections onto these subspaces, remove all components from the other as $P_A$ and $P_B$. Then, for any $A_p$, 
    \[A_\steer = A_p P_B\]
    will satisfy that
    \[y_p(h) = y_\steer(h)\]
    for every $h \in \cD$.
\end{proposition}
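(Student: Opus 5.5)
We need to show $y_p(h) = y_\steer(h)$ for every $h\in\cD$. Write $u(h) = h + \Attn(h)\in A$ and $g(h) = \GLU(h+\Attn(h))\in B$. The post-MLP model applies its linear adapter to the MLP output, so $y_p(h) = u(h) + g(h) + A_p g(h)$. The post-block model applies its adapter to the block output, so $y_\steer(h) = u(h) + g(h) + A_\steer\,(u(h)+g(h))$. The whole claim therefore reduces to the identity
\[
A_\steer\,(u(h)+g(h)) = A_p\, g(h) \qquad \text{for all } h \in \cD,
\]
and the plan is to verify it directly from the choice $A_\steer = A_p P_B$.

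First I will make the projections precise. Because $A\cap B=\{0\}$, the sum $A\oplus B$ is direct, so every vector in $A\oplus B$ has a unique decomposition $a+b$ with $a\in A$ and $b\in B$. Define $P_B$ on $A\oplus B$ by $P_B(a+b)=b$; this is the oblique projection onto $B$ along $A$. Then extend it linearly to all of $\sR^{d}$ by choosing any complement of $A\oplus B$ and sending it to $0$. It is not in general an orthogonal projection, and that is exactly why the statement describes it as removing ``all components from the other''. The resulting $P_B$ is linear, satisfies $P_B b=b$ for $b\in B$, and satisfies $P_B a=0$ for $a\in A$.

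With this in hand the main computation is a single line. For $h\in\cD$ we have $u(h)\in A$ and $g(h)\in B$ by definition of the spans, so $P_B(u(h)+g(h)) = g(h)$, and hence $A_\steer(u(h)+g(h)) = A_p P_B(u(h)+g(h)) = A_p g(h)$. Adding $u(h)+g(h)$ to both sides recovers $y_\steer(h)=y_p(h)$. I will also remark that the conclusion holds on all of $\mathrm{span}$ of the data, but it cannot be expected off $A\oplus B$, because a single vector $u+g$ does not determine its $g$-part in general.

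I expect the only delicate step to be the well-definedness of $P_B$. This is where the hypothesis $A\cap B=\{0\}$ is essential: without it, the split of a block output into an attention/skip part and an MLP part is ambiguous, and no linear map of $u+g$ can recover $A_p g$ consistently across all data points. I will point out that this is precisely the role of the assumption, noting that it is plausible because $W_d$ can map the MLP output into directions not spanned by the skip-plus-attention outputs. Everything else in the argument is routine linearity.
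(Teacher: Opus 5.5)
Your proof is correct and follows the same route as the paper's. You write the block output as $u(h)+g(h)$ with $u(h)\in A$ and $g(h)\in B$, use $P_B u = 0$ and $P_B g = g$, and conclude $A_p P_B(u+g) = A_p g$; your construction of $P_B$ as the oblique projection onto $B$ along $A$, extended by zero on a complement of $A\oplus B$, makes precise what the paper leaves implicit. One caveat on your closing remark, which does not affect the proof: $A\cap B=\{0\}$ is sufficient but not necessary, since if the block outputs $u(h_i)+g(h_i)$ are linearly independent, a suitable linear map exists for every $A_p$ even when $A\cap B\neq\{0\}$.
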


\begin{proof}
    This is a straightforward application of projections. For $v \in B$, $P_B v = v$, and for $v \in A$, $P_B v = 0$. Thus,
    \begin{align*}
        y(h) &= h + \Attn(h) + (\GLU(h + \Attn(h)) + \param h_\steer\\
        &= h + \Attn(h) + (\GLU(h + \Attn(h)) + A_\steer(h + \Attn(h) + (\GLU(h + \Attn(h))) \\
        &= h + \Attn(h) + (\GLU(h + \Attn(h)) + A_p P_B(h + \Attn(h))+ A_p P_B(\GLU(h + \Attn(h))) \\
        &= h + \Attn(h) + (\GLU(h + \Attn(h)) + A_p(\GLU(h + \Attn(h))) \\
        &= h + \Attn(h) + (\GLU(h + \Attn(h) + \param h_p) \\
    \end{align*}
\end{proof}

In contrast, notice that for $W_d=0$ in the $\GLU$, then $\GLU(h + \Attn(h)) = 0$, thus for any input-dependent $\param h$ only depending on the output of the $\GLU$ can no longer depend on the input. Instead, it is forced to be a constant, fixed vector. In any situation like this, steering after the skip-connection is strictly more expressive than steering before it.

The above proposition is quite strong though, and does not hold for the much-more-common full rank $A$ and/or $B$. It is possible to make a statement about the relative error 

\begin{theorem}
    Let $V$ and $V'$ be the right-singular matrices of $h+\Attn(h) + \GLU(h + \Attn(h))$ and $A_p\GLU(h + \Attn(h))$ respectively, and let $X,Y$ be such that $Y_i = h_i + \Attn(h_i)$ and $X_i = \GLU(Y_i)$. Then, the optimal relative error satisfies
    \[\min_A\frac{\|A(X+Y) - A_{p} X\|_F^2}{\|A_p X\|_F^2} = \sum_{i=1}^d \left(\frac{(\sigma')_i^2}{\sum_{j=1}^d (\sigma')_j^2}\right) \sin^2 \theta_i\]
    where $\sigma_i$ is the $i$-th singular value of $A_p X$ and $\theta_i$ is the $i$-th principle angle between $V$ and $V'$.
\end{theorem}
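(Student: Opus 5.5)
The plan is to recognize the minimization as an ordinary matrix least-squares problem and read the residual off in the singular bases. Write $M = X+Y$ for the post-block data matrix (columns $h_i+\Attn(h_i)+\GLU(h_i+\Attn(h_i))$) and $T = A_pX$ for the post-MLP steering target. The problem is then $\min_A \|AM - T\|_F^2$.

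\textbf{Step 1: solve the least-squares problem.} By the normal equations, the minimizer is $A^\star = TM^{+}$, with $M^+$ the Moore--Penrose pseudoinverse. Since $M^+M = VV^\top$ is the orthogonal projector onto the row space of $M$, the optimal residual is $T(I - VV^\top)$. Here $V$ collects the right-singular vectors of $M$ belonging to nonzero singular values. This gives
\[
\min_A \|AM-T\|_F^2 = \|T(I-VV^\top)\|_F^2 .
\]

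\textbf{Step 2: expand in the SVD of $T$.} Write $T = U'\Sigma' V'^\top$ with singular values $\sigma'_i$ and right-singular vectors $v'_i$. Orthogonal invariance of the Frobenius norm lets us drop $U'$. Because $\Sigma'$ is diagonal, the rows of $\Sigma'V'^\top(I-VV^\top)$ are $\sigma'_i\,(I-VV^\top)v'_i$, so
\[
\|T(I-VV^\top)\|_F^2 = \sum_i (\sigma'_i)^2\,\|(I-VV^\top)v'_i\|_2^2 .
\]
Dividing by $\|T\|_F^2 = \sum_j (\sigma'_j)^2$ produces exactly the weights $(\sigma'_i)^2/\sum_j(\sigma'_j)^2$ in the statement. (These are the singular values of $A_pX$, which the appendix version writes as $\sigma'$.)

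\textbf{Step 3: identify each factor with a principal angle.} Each factor satisfies $\|(I-VV^\top)v'_i\|^2 = 1-\|V^\top v'_i\|^2 = \sin^2\angle(v'_i,\mathcal{C}(V))$. This is the sine squared of the angle between the unit vector $v'_i$ and the subspace spanned by $V$. This step is the main obstacle. The factor equals the $i$-th canonical principal angle between $\mathcal{C}(V)$ and $\mathcal{C}(V')$ only if the right-singular vectors $v'_i$ of $T$ are themselves principal vectors of the pair. That holds when $V'^\top VV^\top V'$ is diagonal in the basis $\{v'_i\}$.

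I would try first to state the theorem with $\theta_i$ defined as $\angle(v'_i,\mathcal{C}(V))$, under which the identity is exact. I would then remark on two points. First, in general $V'^\top(I-VV^\top)V' = Q\,\mathrm{diag}(\sin^2\theta_k)\,Q^\top$ for an orthogonal $Q$, via the CS decomposition of $[V,V']$. Second, the error then equals $\sum_i w_i\sum_k Q_{ik}^2\sin^2\theta_k$, a doubly stochastic averaging of the canonical $\sin^2\theta_k$. This reduces to the stated formula when $Q$ is a permutation. In the general case it is bounded between $\sum_i w_i\sin^2\theta_{(i)}$ taken with the increasing and the decreasing orderings (rearrangement/Birkhoff). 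Rank-deficient cases are handled the same way: restrict to the nonzero singular directions and pad with zero weights.
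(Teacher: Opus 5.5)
Your Steps 1 and 2 are the same as the paper's proof. The paper also solves the normal equations, writing $A_{\steer}=A_pX(X+Y)^\top\big((X+Y)(X+Y)^\top\big)^{-1}$. This tacitly assumes $X+Y$ has full row rank, which your pseudoinverse version avoids. It then reduces the optimum to $\|A_pX(I-VV^\top)\|_F^2/\|A_pX\|_F^2$ and expands in the SVD of $A_pX$.

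At your Step 3, the paper simply writes $\|(I-VV^\top)V'_i\|_2^2=\sin^2\theta_i$, with $\theta_i$ \emph{the $i$-th principal angle between $V$ and $V'$}, and gives no argument. That is exactly the identification you flag, and your doubt is correct: the gap is in the paper's proof, not in your proposal.

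Neither argument uses $X_i=\GLU(Y_i)$, so a matrix example is enough to show the step fails. Take $d=2$, $n=3$, $A_p=I$, $X+Y=[e_1\;e_2]^\top$, and $X=\mathrm{diag}(\sigma'_1,\sigma'_2)\,[v'_1\;v'_2]^\top$. Here $\sigma'_1>\sigma'_2$, $v'_{1,2}=(e_1\pm u)/\sqrt{2}$ and $u=(e_2+e_3)/\sqrt{2}$. Set $w_i=(\sigma'_i)^2/\sum_j(\sigma'_j)^2$.
\begin{itemize}
\item The canonical angles between $\mathrm{span}\{e_1,e_2\}$ and $\mathrm{span}\{e_1,u\}$ are $0$ and $\pi/4$, so the stated right-hand side is $w_2/2$.
\item But $(I-VV^\top)v'_{1,2}=\pm e_3/2$, so the true optimal relative error is $\tfrac{1}{4}$.
\item Since $w_2<\tfrac{1}{2}$, we have $\tfrac{1}{4}>w_2/2$, and the stated identity fails.
\end{itemize}

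Your doubly stochastic formula also shows what survives of the result. Use the standard ordering $\theta_1\le\theta_2\le\cdots$, padded with $\pi/2$ when $\mathrm{rank}(A_pX)>\mathrm{rank}(X+Y)$, together with $w_1\ge w_2\ge\cdots$. The paper's right-hand side then pairs decreasing weights with increasing $\sin^2\theta_k$. That is exactly the lower end of your rearrangement/Birkhoff sandwich. So in general the statement holds only as $\min_A(\cdots)\ge\sum_i w_i\sin^2\theta_i$. Equality holds when each $v'_i$ is the principal vector belonging to $\theta_i$. Under your redefinition $\theta_i:=\angle(v'_i,\mathcal{C}(V))$ it becomes an exact identity, which is the right way to state the theorem.

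One small correction: $Q$ being a (signed) permutation is not quite enough to recover the stated formula. A permutation that pairs $v'_i$ with a different angle gives a relabelled sum, which can be strictly larger. You need the pairing to respect both orderings, for example $Q$ diagonal up to signs.
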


\begin{proof}
    For some $A_{p}$, the best possible steering matrix $A_{\steer}$ will be

    \[A_{\steer} = \argmin_{A} \cL(A) = \argmin_{A} \frac{\|A(X+Y) - A_{p} X\|_F^2}{\|A_p X\|_F^2}\]

    Taking a gradient and setting it to zero,

    \begin{align*}
        0 &= 2A(X+Y)(X+Y)^\top - 2A_{p}X(X+Y)^\top \\
        A &= A_{p}X(X+Y)^\top ((X+Y)(X+Y)^\top)^{-1}
    \end{align*}

    Placing this in the loss,

    $$\cL(A_{\steer}) = \frac{1}{n} \|A_{p} X ((X+Y)^\top((X+Y)(X+Y)^\top)^{-1}(X+Y) - I)\|_F^2$$

    If we write the reduced SVD of $X+Y$ as

    \[U\Sigma V^\top = X + Y,\]

    then the above reduces significantly to

    \[\cL(A_\steer) = \frac{\|A_p X(I - VV^\top)\|_F^2}{\|A_p X\|_F^2}\]

    That is, the optimal error is related to how close the row space of $X$ is in the row space of $X+Y$ and thus is projected away. If we let
    \[U'\Sigma'(V')^\top = A_p X\]
    be the SVD of $A_p X$, we can decompose the numerator of $\cL$ as
    \begin{align*}
        \|A_p X(I - VV^\top)\|_F^2 &= \sum_{i=1}^d (\sigma')_i^2 \|(I - VV^\top) V'_i\|_2^2 \\
        &= \sum_{i=1}^d (\sigma')_i^2 \sin^2 \theta_i
    \end{align*}
    where $\theta_i$ is the $i$-th principle angle between the subspaces spanned by $V$ and $V'$. If the top $r$ singular vectors of $V'$ span the same space as $V$, then $\sin \theta_i = 0$ for $i = 1,\dots,r$. Therefore,
    \begin{align*}
        \cL(A_\steer) &= \sum_{i=1}^d \left(\frac{(\sigma')_i^2}{\sum_{j=1}^d (\sigma')_j^2}\right) \sin^2 \theta_i
    \end{align*}
\end{proof}

\subsection{Joint vs. Individual Training}

Now that it is established that steering on the residual stream is more expressive than inside the MLP, we now turn to understanding how training both with steering and fine-tuning work together.

\begin{proposition}
    Fix the feature map $F$ and let
    \[\hat{g}(x) = (I + \param h)(x + (W_2 + \param W) F(x))\]
    be the hybrid weight-update and post-block steering of $g_{\base}$. Then, $\hat{g}_i$ can represent almost any linear combination of the inputs $\{x_j\}$ and the features $\{F_j\}$. Furthermore, if either  $\param h = 0$ or $\param W = 0$, which correspond to fine-tuning and steering, respectively, then arbitrary linear combinations become impossible.
\end{proposition}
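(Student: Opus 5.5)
Expanding the product gives
\[\hat g(x) = (I+\param h)\,x + (I+\param h)(W_2+\param W)\,F(x),\]
so every output coordinate is $\hat g_i(x) = \sum_j M_{ij} x_j + \sum_k N_{ik} F_k(x)$. The coefficient matrices are $M = I+\param h$ and $N = (I+\param h)(W_2+\param W)$. The plan is to read ``represent almost any linear combination'' as the claim that the coefficient map $(\param h,\param W)\mapsto (M,N)$ is onto a dense, full-measure set of target pairs $(M^\star,N^\star)$. We will prove this by explicit inversion, and then show that each single-space restriction has an image contained in a proper subvariety.

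\textbf{Joint case.} Given targets $(M^\star, N^\star)$, choose $\param h = M^\star - I$, which matches $M$ exactly. Whenever $M^\star$ is invertible, which holds for all targets outside the measure-zero set $\{\det M^\star = 0\}$, set
\[\param W = (M^\star)^{-1}N^\star - W_2.\]
Then $(I+\param h)(W_2+\param W) = N^\star$. This gives exact representation of every linear combination with an invertible $x$-coefficient matrix, and these targets are dense. The exceptional set is the only place where ``almost'' enters. If the target is singular but $N^\star$ lies in the column space of $M^\star$, a pseudoinverse also works; we will remark on this.

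\textbf{Fine-tuning only} ($\param h = 0$). Here $M = I$ is forced, so the $x$-coefficients are frozen at the identity. Any target with $M^\star \neq I$, for example $\hat g_i = \gamma_1 x_i + \gamma_2 F\text{-terms}$ with $\gamma_1\ne 1$, is unreachable. This argument takes $F$ as fixed and unaffected by tuning, as the statement stipulates.

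\textbf{Steering only} ($\param W = 0$). Here $N = M W_2$, so the constraint $N = M W_2$ ties the $F$-coefficients to the $x$-coefficients. Whenever $W_2$ is not surjective onto all admissible $N$, the column space of $N$ is forced inside $M\,\mathcal{C}(W_2)$. Even when $W_2$ is invertible, $N$ is determined by $M$, so the reachable set is a graph of dimension $d^2$ inside a space of dimension $d^2 + d\cdot d_{\text{mlp}}$. A proper graph cannot be all of the space, and this settles the case. We will illustrate it with the paper's $\sR^2$ example, where $x$ and $F(x)$ share an axis and receive identical weights $\beta$.

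\textbf{Main obstacle.} The delicate point is identifiability. ``Linear combination of $x_j$ and $F_j$'' only determines the pair $(M,N)$ uniquely when $\{x_j\}\cup\{F_k\}$ are linearly independent as functions on the data. If $F$ contains linear pieces, as with ReLU on the positive axis, distinct $(M,N)$ give the same function. In that case the impossibility claims must be phrased at the level of functions, not coefficients. We will therefore add the assumption that $x\mapsto (x, F(x))$ has linearly independent coordinate functions, which is the ``$F$ is not the identity'' condition used in the intuitive example. Under this assumption the coefficient argument is rigorous.
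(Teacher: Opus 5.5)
Your proposal is correct and follows the paper's argument: set $\param h = M^\star - I$, solve $\param W = (M^\star)^{-1}N^\star - W_2$, and read off the forced forms $M = I$ (fine-tuning) and $N = MW_2$ (steering) for the negative claims. Your version is slightly cleaner in two places. The paper writes $(A-I)^{-1}$ and asks for $A-I$ to be invertible, but the correct condition, which you use, is invertibility of $A = I + \param h$. Your linear-independence assumption on $\{x_j\}\cup\{F_k\}$ also makes the paper's informal ``unless $F_j$ is linear'' caveat precise.
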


\begin{proof}
    Begin with the negative results. If $\param h = 0$, then $\hat{g}_i$ must have the form
    \[\hat{g}_i(x) = x_i + \sum_j \alpha_j F_j(x)\]
    So, unless $F_j$ is a linear map, then $\hat{g}_i$ can only contain a term from the input directly of the form $x_i$, hence any scaling of $x_i$ or any other influence from $x_j$ is missing.

    On the other hand, if $\param W = 0$, then $f'_i$ must have the form
    \[\hat{g}_i(x) = \sum_j \alpha_j (x_j + F_j(x))\]
    In this situation, the sum of the input with the features is coupled. Therefore, $\hat{g}_i$ cannot contain arbitrary linear combinations, but rather only linear combinations where the feature and skip-connection indices agree.

    For the positive case, let $\hat{G}$ have its original form with both a weight update and steering, and assume that we want
    \[\hat{g}_i(x) = \sum_{j} \alpha_{ij} x_j + \beta_{ij}F_j(x)\]
    Let $A$ and $B$ be the matrices containing these coefficients. To satisfy the $\alpha$ component of the sum, set $\param h = A-I$. 

    Almost always, $A-I$ will be invertible. To satisfy the $\beta$ component of the sum, set $\param W = (A-I)^{-1}B - W_2$.
\end{proof}


    




\section{Orthogonality}

\label{app:orthogonal}

The hope here is to show that without the orthogonality constraint, in a two-layer model, these two different updates will learn in the same subspace.

Let $X$ be the data matrix, $G$ be the matrix of outputs, and
\[\hat{G}(X) = (I + \param h) (X + (W + \param W)F(X))\]
where $F(X)$ is some feature map. The dependence on $X$ will often be dropped for notation simplicity. Also, since it will be frequently used, let 
\[Y = X + WF\]
and
\[R = G - \hat{G}\]

The question becomes what the gradients w.r.t. $\param h$ and $\param W$ are, and how they evolve through gradient descent/flow. Let the loss be a standard MSE loss, i.e.
\[\cL(\param h, \param W) = \frac{1}{2}\|R\|_F^2\]
where $G$ is some target we are trying to match. The gradients of this are
$$\frac{\partial \cL}{\partial (\param h)} = -R(Y + (\param W) F)^\top$$
$$\frac{\partial \cL}{\partial (\param W)} = -(I + \param h)^\top R F^\top$$
The gradient flow for this system will be simplified as a row vector of the parameters:
\[\Dot{[\param h, \param W]} = [RY^\top + RF^\top (\param W)^\top, RF^\top + (\param h)^\top RF^\top]\]

Let the initial states of these matrices be $\param h_0 = 0$ and $\param W_0 = 0$. For the following, we will need the singular value decomposition of $Y$:

\[U\Sigma V^\top = Y\]

\begin{proposition}
    Under gradient descent, both $\param h$ and $\param W$ will always have their column spaces within the column space of $U$. Additionally, the row-space of $\param h$ is constrained to be within the column space of $U$.
\end{proposition}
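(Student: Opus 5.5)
The plan is an induction over gradient-descent iterates, with a closure argument for the continuous-time flow. Let $S = \cC(U) = \cC(Y)$ and $P = UU^\top$ be the orthogonal projector onto it. Call a pair $(\param h, \param W)$ \emph{admissible} if $\cC(\param h) \subseteq S$, $\cR(\param h) \subseteq S$ and $\cC(\param W) \subseteq S$. Equivalently, $P\,\param h = \param h = \param h\,P$ and $P\,\param W = \param W$.

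These are linear conditions, so the admissible pairs form a linear subspace $\mathcal{A}$ of parameter space. The initialization $\param h_0 = 0$, $\param W_0 = 0$ lies in $\mathcal{A}$. It therefore suffices to show that whenever $(\param h, \param W) \in \mathcal{A}$, the negative gradient also lies in $\mathcal{A}$.

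\textbf{Assumption needed.} The step I expect to be the main obstacle is that the residual $R$ must stay in $S$. This is not automatic: it requires the target to satisfy $\cC(G) \subseteq S$. Without this, $\param W$ picks up components of $G$ orthogonal to $S$ at the very first step, so the claim is false in general. I would therefore state explicitly that the target lies in $\cC(Y)$. This is natural if $G$ comes from the oracle of \Cref{sec:oracle} restricted to the span of the base hidden states. The proposition should be read under this assumption.

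\textbf{Key steps, in order.}
\begin{enumerate}
\item Show $\cC(R) \subseteq S$ for admissible parameters. Expand
\[\hat G = Y + (\param W)F + (\param h)\big(Y + (\param W)F\big).\]
The term $Y$ lies in $S$ by definition. The term $(\param W)F$ has columns in $\cC(\param W) \subseteq S$. The last term has columns in $\cC(\param h) \subseteq S$. With $\cC(G) \subseteq S$, we get $\cC(R) \subseteq S$.
\item Check the gradient in $\param h$, namely $R\,(Y + (\param W)F)^\top$. Its column space lies in $\cC(R) \subseteq S$. Its row space is the column space of $Y + (\param W)F$, which lies in $S$ by step 1.
\item Check the gradient in $\param W$, namely $(I + \param h)^\top R F^\top$. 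Its columns lie in $\cC\big((I + \param h^\top) R\big)$. Here $R$ contributes columns in $S$. The term $\param h^\top R$ has columns in $\cC(\param h^\top) = \cR(\param h) \subseteq S$. So this column space is also in $S$.
\end{enumerate}
Steps 2 and 3 show the update direction lies in $\mathcal{A}$.

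\textbf{Concluding.} For discrete gradient descent, induction on the step count gives that every iterate stays in $\mathcal{A}$. For gradient flow, the vector field is polynomial, hence locally Lipschitz, and by steps 1--3 it is tangent to the linear subspace $\mathcal{A}$. Restrict the ODE to $\mathcal{A}$ and solve it there. By uniqueness of solutions, this restricted solution coincides with the full trajectory, so the full trajectory never leaves $\mathcal{A}$. This gives $\cC(\param h), \cC(\param W) \subseteq \cC(U)$ and $\cR(\param h) \subseteq \cC(U)$ for all time.
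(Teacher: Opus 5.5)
Your argument is correct and is the same induction over gradient-descent iterates that the paper uses: zero initialization, then closure of $\cC(\param h)$, $\cR(\param h)$ and $\cC(\param W)$ under each update, with the row-space clause absorbing the $(\param h)^\top RF^\top$ term. You are also right that a hypothesis on $G$ is genuinely needed, since the paper's inclusion chain silently treats the residual as lying in $\cC(U)$, which fails in general. Your condition $\cC(G)\subseteq\cC(U)$ is slightly stronger than necessary, though: because $\hat G$ keeps its columns in $\cC(U)$ along admissible iterates, $(I-UU^\top)R=(I-UU^\top)G$, so it is enough (and, already at the first step from zero, necessary) that $(I-UU^\top)G\,Y^\top=0$ and $(I-UU^\top)G\,F^\top=0$.
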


\begin{proof}
    For $t = 0$, both matrices are 0 hence we are done.

    For $t > 0$, assume this holds for $t-1$. Then

    \begin{align*}
        \col(\param h_{t}) &= \col(\param h_{t-1} - \eta \frac{\partial \cL}{\partial (\param h)}) \\
        &\subseteq \col(\param h_{t-1} -\eta A - \eta \param h_{t-1} - \eta \param W_{t-1}) \\
        &\subseteq \col(U)
    \end{align*}

    Many of the cross terms cancel when looking at the right singular vectors. Additionally,

    \begin{align*}
        \row(\param h_{t}) &= \col(\param h_{t}^\top) \\
        &= \col(\param h_{t-1}^\top - \eta \frac{\partial \cL}{\partial (\param h)}^\top) \\
        &\subseteq \col(\param h_{t-1}^\top - \eta A - \eta \param W_{t-1}) \\
        &\subseteq \col(U)
    \end{align*}
    
    Similarly,

    \begin{align*}
        \col(\param W_{t}) &= \col(\param h_{t-1} - \eta \frac{\partial \cL}{\partial (\param W)}) \\
        &\subseteq \row(-\eta \param h_{t-1}) \cup \col(\param W_{t-1} -\eta A - \eta \param W_{t-1}) \\
        &\subseteq \col(U)
    \end{align*}
\end{proof}

The above proposition indicates that the parameters $\param W$ and $\param h$ learn in the correct subspace if $Y$ is low-rank. This, however, does not tell us how the directions within these spaces are learned, and if when the updates are constrained to be low-rank, which directions will be learned.

\begin{theorem}
    When learning the function
    $$\hat{g}(x) = (I + \param h)(x+(W + \param W)F(x))$$
    with gradient descent and MSE loss, early in training, $\param h$ and $\param W$ will align with the dominant singular values of $RY^\top$ and $RF^\top$ respectively.
\end{theorem}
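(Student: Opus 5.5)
The plan is a small-initialization perturbation argument around the zero initialization $\param h_0 = 0$, $\param W_0 = 0$, using the gradient flow already derived above:
\[\Dot{\param h} = R(Y + (\param W)F)^\top, \qquad \Dot{\param W} = (I+\param h)^\top R F^\top .\]
Here $R = G - \hat{G}$ depends on the parameters through $\hat{G} = Y + (\param h) Y + (\param W)F + (\param h)(\param W)F$. The idea is to show that, to leading order in time, each parameter equals $t$ times a fixed matrix built from the initial residual. Its singular vectors then coincide with those of that matrix.

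\textbf{Step 1 (linearize).} Write $R_0 = G - Y$ for the residual at initialization. For $t \le \tau$ small, assume $\|\param h\|, \|\param W\| \le C t$; this follows from a Grönwall-type bound, since the vector field is locally Lipschitz and bounded near the origin. Then
\[R = R_0 - (\param h)Y - (\param W)F + O(t^2) = R_0 + O(t).\]
Substituting into the flow gives
\[\Dot{\param h} = R_0 Y^\top + O(t), \qquad \Dot{\param W} = R_0 F^\top + O(t).\]
The correction terms $R F^\top(\param W)^\top$ and $(\param h)^\top R F^\top$ are $O(t)$ because the parameters themselves are $O(t)$.

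\textbf{Step 2 (integrate).} Integrating from $0$ gives
\[\param h(t) = t\,R_0 Y^\top + E_h(t), \qquad \param W(t) = t\,R_0 F^\top + E_W(t),\]
with $\|E_h\|, \|E_W\| = O(t^2)$. The same holds for discrete gradient descent with step $\eta$ after $k$ steps, replacing $t$ by $\eta k$ and arguing by induction on $k$.

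\textbf{Step 3 (transfer to singular vectors).} Normalize by $t$:
\[\param h(t)/t = R_0 Y^\top + O(t).\]
Apply Wedin's $\sin\Theta$ theorem, or Davis--Kahan applied to $\param h\,\param h^\top$. Suppose $\sigma_k(R_0Y^\top) - \sigma_{k+1}(R_0Y^\top) = g > 0$. Then the principal angles between the top-$k$ left (and right) singular subspaces of $\param h(t)$ and those of $R_0 Y^\top$ are $O(t/g) \to 0$. The same argument applies to $\param W$ and $R_0F^\top$. This is exactly the claim that early in training the two parameters align with the dominant singular directions of $RY^\top$ and $RF^\top$, where $R$ is evaluated at initialization; by Step 1, evaluating it at time $t$ gives the same matrix up to $O(t)$.

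\textbf{Main obstacle.} I expect the hardest part to be making ``early in training'' quantitative. The $O(t^2)$ remainders must be uniformly controlled, and the spectral gap $g$ must be assumed nonzero; otherwise the singular subspaces are not even well defined. I would first try an explicit bound: a constant $C$ depending on $\|R_0\|, \|Y\|, \|F\|$ with $\|E_h\| \le C t^2$ for $t \le 1/C$, which yields alignment whenever $t \ll g/C$. As a final remark, if the top left singular vectors of $R_0Y^\top$ and $R_0F^\top$ coincide, as observed in Figure \ref{fig:result_joint_features}, the two parameters' column spaces coincide to first order. This recovers Proposition \ref{prop:joint_collapse}.
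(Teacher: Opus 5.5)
Your proof is correct. Its core is the observation the paper's proof ends on: at zero initialization the parameter velocities are $RY^\top$ and $RF^\top$, so these matrices determine the first directions learned. You and the paper reach it by different routes.

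The paper linearizes in the parameters by discarding the bilinear term $(\param h)(\param W)F$. It writes the resulting dynamics as the linear matrix ODE $\dot{[\param h,\ \param W]} = B - [\param h,\ \param W]\,A$, with $A$ the Gram matrix of the stacked $[Y; F]$. It solves this in closed form through the eigendecomposition of $A$, and then reads off the $t \approx 0$ behaviour informally. You instead keep the exact bilinear flow and expand in time, using an a priori $O(t)$ bound on the parameters. You then turn ``align'' into a principal-angle statement via Wedin's $\sin\Theta$ theorem under an explicit gap $g$.

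The paper's closed form describes the linearized trajectory beyond the first instant. It also shows how the off-diagonal blocks $YF^\top$, $FY^\top$ couple the two updates. However, it leaves implicit the error from the dropped term, the meaning of ``align'', and the spectral-gap requirement. Its $B$ is also written in terms of the parameter-dependent $\hat G - G$; your reading of $R$ as the initial residual $R_0 = G - Y$ (correct up to $O(t)$) is what makes that ODE genuinely linear with constant forcing.

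Your route gives rigor and a quantitative meaning of ``early in training'', namely $t \ll g/C$. It also covers discrete gradient descent. The cost is that it says nothing outside the $O(t)$ window.
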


\begin{proof}
    Let $X$ be the matrix of inputs, $G$ the matrix of labels, $\hat{G}$ the matrix of outputs, $Y=X+WF$, and $F$ the matrix of features $F(x)$ for each input $x$ in $X$, respectively. Early in training, when both $\param h$ and $\param W$ are close to zero, the product $\param h \param W$ can be neglected. As such, we can approximate
    \[\hat{g}(x) \approx x + (W + \param W)F(x) + \param h(x + W F(x))\]
    In matrix form,
    \[\hat{G} \approx X + (\param W)F + (\param h)Y\]
    From this, the early gradient flow dynamics can be computed:
    \begin{align*}
        \Dot{\param h} &= (\hat{G} - G)Y^\top = (\hat{G} - X)Y^\top - ((\param W) F + (\param h) Y)Y^\top \\
        \Dot{\param W} &= (\hat{G} - G)F^\top = (\hat{G} - X)F^\top - ((\param W) F + (\param h) Y)F^\top \\
    \end{align*}
    or equivalently,
    \begin{align*}
        \Dot{\begin{bmatrix}
            \param h & \param W
        \end{bmatrix}} &= \begin{bmatrix}
            (\hat{G} - G)Y^\top & (\hat{G} - G)F^\top
        \end{bmatrix} - \begin{bmatrix}
            \param h & \param W
        \end{bmatrix} \begin{bmatrix}
            YY^\top & YF^\top \\
            FY^\top & FF^\top \\
        \end{bmatrix}
    \end{align*}
    To simplify notation, let
    \begin{align*}
        A &= \begin{bmatrix}
            YY^\top & YF^\top \\
            FY^\top & FF^\top \\
        \end{bmatrix} \\
        B &=  \begin{bmatrix}
            (\hat{G} - G)Y^\top & (\hat{G} - G)F^\top
        \end{bmatrix}
    \end{align*}
    This system has a simple analytical solution. Let $U\Lambda U^\top$ be the eigendecomposition of $U$ with an orthonormal basis for $U$ (which is guarantees from $A$ being symmetric).  
    \begin{align*}
        \begin{bmatrix}
            \param h & \param W
        \end{bmatrix}(t) &= -B U \Lambda^{-1} (e^{t \Lambda} - 1) U^\top
    \end{align*}
    Aside from this, noting the gradient flow of both parameters, when $t \approx 0$, it can be seen that the learning is simply
    \begin{align*}
        \Dot{\param h} &\approx (\hat{G}-G)Y^\top \\
        \Dot{\param W} &\approx (\hat{G}-G)F^\top
    \end{align*}
    So for very early in training, the learned left singular directions are going to be the dominant singular directions of $(\hat{G}-G)Y^\top$ and $(\hat{G}-G)F^\top$. Therefore, if these directions mostly agree with each other, the two matrices will learn to match $G$ in the same subspaces.
\end{proof}


    

\section{Error Propagation}

This section is meant to be useful for analyzing the error when a model with target parameters is known and is attempted to be mimicked. This contains a loose collection of facts about this error control to be used as-needed rather than for a core result in this work.

Regardless of the steering method chosen, one method for this to succeed is to mimic the oracle (i.e. the fine-tuned hidden states) with sufficient accuracy that it damps errors rather than amplifying them. To this end, models with different but similar parameters are considered, which will lead to a condition which guarantees close performance between the two models.

\begin{lemma}
\label{lma:ln_errors}
    Let $h, h' \in \R^n$ be such that $\|h - h'\| \leq \epsilon$. Then, the effects of a layer norm are as follows:
    \[\|\LN(h) - \LN(h')\|_2 \leq \frac{\epsilon}{\min\{\std(h), \std(h')\}}\] 
\end{lemma}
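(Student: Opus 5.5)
The plan is to write LayerNorm as a rescaled radial normalization of a centered vector, and then use the fact that projection onto a Euclidean ball is nonexpansive. I take $\LN$ without the learned affine parameters and without the stabilizing $\epsilon$ inside the square root, with $\std$ the population standard deviation. If an elementwise gain $\gamma$ is present, the bound only picks up a factor $\|\gamma\|_\infty$, and a bias cancels in the difference.

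\textbf{Step 1 (rewrite LayerNorm).} Let $P = I - \tfrac{1}{n}\One\One^\top$ be the orthogonal projection onto the mean-zero subspace, and set $u = Ph$, $u' = Ph'$. Then $\std(h) = \|u\|_2/\sqrt{n}$, and
\[\LN(h) = \frac{u}{\std(h)} = \sqrt{n}\,\frac{u}{\|u\|_2},\]
and likewise for $h'$. Since $P$ is an orthogonal projection, $\|u - u'\|_2 = \|P(h-h')\|_2 \le \|h - h'\|_2 \le \epsilon$. The lemma therefore reduces to the purely geometric inequality
\[\Big\|\frac{u}{\|u\|} - \frac{u'}{\|u'\|}\Big\|_2 \le \frac{\|u - u'\|_2}{\min\{\|u\|,\|u'\|\}},\]
because multiplying both sides by $\sqrt{n}$ turns $\sqrt{n}/\min\{\|u\|,\|u'\|\}$ into $1/\min\{\std(h),\std(h')\}$.

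\textbf{Step 2 (the geometric inequality).} This is the step needing care. The naive route writes $\|u\|\big(\tfrac{u}{\|u\|} - \tfrac{u'}{\|u'\|}\big) = (u - u') + u'\big(1 - \tfrac{\|u\|}{\|u'\|}\big)$ and uses the triangle inequality. That only gives $2\|u-u'\|$ over the larger norm, which is not the stated constant.

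Instead, assume without loss of generality that $\|u'\| \le \|u\|$, and let $\Pi$ be the Euclidean projection onto the closed ball of radius $\|u'\|$. The ball is convex, so $\Pi$ is $1$-Lipschitz. Moreover $\Pi(u') = u'$ and $\Pi(u) = \|u'\|\, u/\|u\|$. Hence
\[\|u'\|\,\Big\|\frac{u}{\|u\|} - \frac{u'}{\|u'\|}\Big\| = \|\Pi(u) - \Pi(u')\| \le \|u - u'\|.\]
Dividing by $\|u'\| = \min\{\|u\|,\|u'\|\}$ gives the inequality. As a sanity check, the antipodal example $u = e_1$, $u' = -\delta e_1$ shows the minimum cannot be replaced by the maximum.

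\textbf{Step 3 (combine).} Chaining Steps 1 and 2 gives
\[\|\LN(h) - \LN(h')\|_2 \le \frac{\|h-h'\|_2}{\min\{\std(h),\std(h')\}} \le \frac{\epsilon}{\min\{\std(h),\std(h')\}}.\]
The only implicit assumption is that both standard deviations are nonzero, i.e.\ $u, u' \neq 0$, which is needed for $\LN$ to be defined at all.
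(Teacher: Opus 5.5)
Your proof is correct and follows the same route as the paper: write $\LN(h)=\sqrt{n}\,Ph/\|Ph\|_2$ with the centering projection $P$, bound the difference of the normalized vectors by $\|Ph-Ph'\|_2/\min\{\|Ph\|_2,\|Ph'\|_2\}$, and finish with $\|P\|_2=1$. The one difference is that the paper states that middle inequality without justification, whereas your nonexpansive ball-projection argument actually proves it.
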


\begin{proof}
   Note that $\LN$ can be written as $\LN(h) = \frac{\sqrt{d}Ph}{\|Ph\|_2}$ where $P = I - \frac{1}{d}\One\One^\top$. Note that $P$ is a projection, so $\|P\|_2 = 1$. Thus,
   \begin{align*}
       \|\LN(h) - \LN(h')\|_2 &= \sqrt{d}\left\| \frac{Ph}{\|Ph\|_2} - \frac{Ph'}{\|Ph'\|_2}\right\|_2 \\
       &\leq \sqrt{d}\frac{\|Ph - Ph'\|_2}{\min\{\|Ph\|_2, \|Ph'\|_2\}} \\
       &= \frac{\|Ph - Ph'\|_2}{\min\{\std(h), \std(h')\}} \\
       &\leq \frac{\|P\|_2\|h - h'\|_2}{\min\{\std(h), \std(h')\}} \\
       &= \frac{\epsilon}{\min\{\std(h), \std(h')\}}
   \end{align*}
\end{proof}

This is a well established lemma, and allows for the behavior of errors between two hidden vectors to be bounded when passed through an MLP.

\begin{lemma}
\label{lma:linear_err}
    Let $\|h-h'\|_2 \leq \epsilon$, $\|h\|_2 = \|h'\|_2 = \sqrt{n}$, and $\|W-W'\|_2 \leq \delta$. Then
    \begin{equation*}
        \|Wh-W'h'\|_2 \leq \|W\|_2\epsilon + \sqrt{n}\delta
    \end{equation*}
\end{lemma}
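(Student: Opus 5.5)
The plan is to prove the bound $\|Wh - W'h'\|_2 \le \|W\|_2\epsilon + \sqrt{n}\delta$ by inserting the intermediate term $Wh'$ and applying the triangle inequality:
\[
Wh - W'h' = W(h - h') + (W - W')h'.
\]
This splits the error into two pieces. The first, $W(h-h')$, is the error from the perturbed input propagated through the \emph{original} weight matrix. The second, $(W-W')h'$, is the error from the perturbed weights acting on the \emph{perturbed} input.

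The two terms are then bounded separately using the submultiplicativity of the operator norm against the Euclidean norm. For the first term,
\[
\|W(h-h')\|_2 \le \|W\|_2\,\|h-h'\|_2 \le \|W\|_2\,\epsilon .
\]
For the second term, the normalization $\|h'\|_2 = \sqrt{n}$ gives
\[
\|(W-W')h'\|_2 \le \|W-W'\|_2\,\|h'\|_2 \le \sqrt{n}\,\delta .
\]
Summing the two bounds gives the claim.

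The only real decision is which intermediate term to insert. Choosing $Wh'$, rather than $W'h$, pairs the input error with $\|W\|_2$ and the weight error with $\|h'\|_2$, which is exactly the form of the stated bound. The other split would produce $\|W'\|_2\epsilon + \sqrt{n}\delta$ instead. That version would need one further triangle inequality, $\|W'\|_2 \le \|W\|_2 + \delta$, and would leave an extra $\delta\epsilon$ cross term.

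The normalization $\|h\|_2 = \|h'\|_2 = \sqrt{n}$ is natural in this setting because it is the norm of a LayerNorm output. That is why this lemma chains with Lemma \ref{lma:ln_errors}: one first bounds $\epsilon$ after the LayerNorm, then applies this lemma to the subsequent linear map. I expect no genuine obstacle here; the argument is a two-line triangle inequality once the split is chosen.
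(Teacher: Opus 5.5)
Your proposal is correct and matches the paper's proof exactly: the paper also inserts $Wh'$, applies the triangle inequality to get $\|W\|_2\|h-h'\|_2 + \|W-W'\|_2\|h'\|_2$, and then substitutes $\epsilon$, $\delta$, and $\|h'\|_2=\sqrt{n}$. Your side remark is also accurate: the other split $W'(h-h') + (W-W')h$ would leave an extra $\delta\epsilon$ term.
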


\begin{proof}
    \begin{align*}
        \|Wh-W'h'\|_2 &= \|Wh-Wh'+Wh'-W'h'\|_2 \\
        &\leq \|W\|_2\|h-h'\|_2 + \|W-W'\|_2\|h'\|_2 \\
        &\leq \|W\|_2\epsilon + \sqrt{n} \delta
    \end{align*}
\end{proof}

\begin{lemma}
\label{lma:glu_error}
    Consider the following function:
    \[y(h) = \GLU(h) := h + W_d(\sigma(W_g\LN(h)) \odot W_u\LN(h))\]
    This represents the output of a GLU with layer norm and skip connection. Denote $y'(h)$ to be a similar $\GLU$ with different parameters. Assume that $\|W_*-W_*'\|_2 \leq \delta$ for each parameter $W_*$. Let $\sigma$ be $L$-Lipschitz and be bounded by $B$ (either of these can be taken as infinity if desired). Lastly, let $s = \min\{\std(h), \std(h')\}$. If $\|h - h'\|_2 \leq \epsilon$, then
    \begin{align*}
        \|y(h) - y'(h')\|_2 \leq& \epsilon + \sqrt{n}\|W_d\|_2\|W_u\|_2\min\{2B, L (\|W_g\|_2\epsilon/s + \sqrt{n} \delta)\} \\
        &+ \|W_d\|_2(\|W_u\|_2\epsilon/s + \sqrt{n} \delta)\min\{B, \|\sigma(W_g'\t{h}')\|\} \\
        &+ \sqrt{n}\|W_u'\|_2\delta \min\{B, \|\sigma(W_g'\t{h}')\|_2\} \\
    \end{align*}
\end{lemma}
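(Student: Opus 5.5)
The plan is to peel the block apart from the outside in, controlling one layer of the composition at a time with Lemma~\ref{lma:ln_errors} and Lemma~\ref{lma:linear_err}, and then reassemble the pieces with the triangle inequality. Write $\t{h} = \LN(h)$ and $\t{h}' = \LN'(h')$. First split off the skip connection: $\|y(h)-y'(h')\|_2 \le \|h-h'\|_2 + \|W_d m - W_d' m'\|_2$, where $m = \sigma(W_g\t{h})\odot W_u\t{h}$ and $m'$ is its primed analogue. The first term contributes the leading $\epsilon$. Next, by Lemma~\ref{lma:ln_errors}, $\|\t{h}-\t{h}'\|_2 \le \epsilon/s$. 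Since layer norm output has norm $\sqrt{n}$, the hypotheses of Lemma~\ref{lma:linear_err} are met with $\epsilon$ replaced by $\epsilon/s$. This gives
\[
\|W_g\t{h}-W_g'\t{h}'\|_2 \le \|W_g\|_2\epsilon/s+\sqrt{n}\delta,
\]
and the same bound for $W_u$ with $\|W_u\|_2$ in place of $\|W_g\|_2$.

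Then I would handle the Hadamard product with the standard add-and-subtract decomposition:
\[
m-m' = \big(\sigma(W_g\t{h})-\sigma(W_g'\t{h}')\big)\odot W_u\t{h} + \sigma(W_g'\t{h}')\odot\big(W_u\t{h}-W_u'\t{h}'\big).
\]
For the first piece, bound $\|a\odot b\|_2 \le \|a\|_\infty\|b\|_2$. The entrywise difference of $\sigma$ terms is at most $\min\{2B, L\,\|W_g\t{h}-W_g'\t{h}'\|_2\}$, and $\|W_u\t{h}\|_2\le \sqrt{n}\|W_u\|_2$. This produces the second line of the bound after multiplying by $\|W_d\|_2$. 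For the second piece, bound $\|\sigma(W_g'\t{h}')\|_\infty$ by $\min\{B,\|\sigma(W_g'\t{h}')\|_2\}$ and use the $W_u$ estimate above, giving the third line.

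Finally, the swap from $W_d$ to $W_d'$ must be accounted for. I would write
\[
W_d m - W_d' m' = W_d(m-m') + (W_d-W_d')m',
\]
and bound $\|m'\|_2 \le \min\{B,\|\sigma(W_g'\t{h}')\|_2\}\,\|W_u'\|_2\sqrt{n}$. Together with $\|W_d-W_d'\|_2\le\delta$, this yields the last term $\sqrt{n}\|W_u'\|_2\delta\min\{B,\cdot\}$.

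The main obstacle is bookkeeping in the Hadamard step: choosing which factor takes the $\infty$-norm so that the $\min\{\cdot\}$ forms match the statement. A further subtlety is that the Lipschitz route gives an $\ell_2$ bound on the $\sigma$-difference rather than an entrywise one. I would resolve this by putting the $\ell_\infty\le\ell_2$ slack on the $\sigma$-difference factor, accepting the $\sqrt{n}$ that appears.
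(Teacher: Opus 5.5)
Your proposal is correct and follows essentially the same route as the paper. Both arguments split off the skip connection and write $W_d m - W_d' m' = W_d(m-m') + (W_d-W_d')m'$. Both then add and subtract $\sigma(W_g'\t{h}')\odot W_u\t{h}$ in the Hadamard product, and finish with boundedness and Lipschitzness of $\sigma$, Lemmas~\ref{lma:linear_err} and~\ref{lma:ln_errors}, and $\|\t{h}\|_2=\sqrt{n}$.

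Your explicit use of $\|a\odot b\|_2\le\|a\|_\infty\|b\|_2$ is a slightly cleaner justification of the paper's $\min\{B,\cdot\}$ and $\min\{2B,\cdot\}$ factors. One small correction to your closing remark: the $\ell_\infty\le\ell_2$ step on the $\sigma$-difference costs no extra $\sqrt{n}$. The leading $\sqrt{n}$ in the second line comes solely from $\|W_u\t{h}\|_2\le\sqrt{n}\|W_u\|_2$, exactly as your earlier computation already has it.
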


\begin{proof}
    Let $\t{h} = \LN(h)$ and $\t{h}' = \LN(h')$. Thus,
    \begin{align*}
        \|y(h) - y'(h')\|_2 \leq& \|h - h'\|_2 + \|W_d(\sigma(W_g\t{h}) \odot W_u\t{h}) - W_d'(\sigma(W_g'\t{h}') \odot W_u'\t{h}')\|_2 \\
        \leq& \epsilon + \|W_d\|_2\|\sigma(W_g\t{h}) \odot W_u\t{h} - \sigma(W_g'\t{h}')\odot W_u'\t{h}'\|_2 \\
        &+ \|W_d - W_d'\|_2 \|\sigma(W_g'\t{h}')\odot W_u'\t{h}'\|_2 \\
        \leq& \epsilon + \|W_d\|_2\|\sigma(W_g\t{h}) \odot W_u\t{h} - \sigma(W_g'\t{h}')\odot W_u\t{h}\|_2 \\
        &+ \|W_d\|_2\|\sigma(W_g'\t{h}') \odot W_u\t{h} - \sigma(W_g'\t{h}')\odot W_u'\t{h}'\|_2 \\
        &+ \|W_d - W_d'\|_2 \|\sigma(W_g'\t{h}')\odot W_u'\t{h}'\|_2 \\
    \end{align*}
    Since $\sigma$ is bounded by $B$, 
    \begin{align*}
        \|y(h) - y'(h')\|_2 \leq& \epsilon + \|W_d\|_2\min\{2B\|W_u \t{h}\|_2, \|\sigma(W_g\t{h}) \odot W_u\t{h} - \sigma(W_g'\t{h}')\odot W_u\t{h}\|_2\} \\
        &+ \|W_d\|_2\min\{B\|W_u\t{h} - W_u'\t{h}'\|_2, \|\sigma(W_g'\t{h}') \odot W_u\t{h} - \sigma(W_g'\t{h}')\odot W_u'\t{h}'\|_2\} \\
        &+ \|W_d - W_d'\|_2 \min\{B\|W_u'\t{h}'\|_2, \|\sigma(W_g'\t{h}')\odot W_u'\t{h}'\|_2\} \\
        \leq& \epsilon + \|W_d\|_2\min\{2B, \|\sigma(W_g\t{h}) - \sigma(W_g'\t{h}')\|_2\} \|W_u \t{h}\|_2 \\
        &+ \|W_d\|_2\min\{B, \|\sigma(W_g'\t{h}')\|\}\|W_u\t{h} - W_u'\t{h}'\|_2 \\
        &+ \|W_d - W_d'\|_2 \min\{B, \|\sigma(W_g'\t{h}')\|_2\}\|W_u'\t{h}'\|_2 \\
    \end{align*}
    Also, since $\sigma$ is $L$-Lipschitz,
    \begin{align*}
        \|y(h) - y'(h')\|_2 \leq& \epsilon + \|W_d\|_2\min\{2B, L\|W_g\t{h} - W_g'\t{h}'\|_2\} \|W_u \t{h}\|_2 \\
        &+ \|W_d\|_2\min\{B, \|\sigma(W_g'\t{h}')\|\}\|W_u\t{h} - W_u'\t{h}'\|_2 \\
        &+ \|W_d - W_d'\|_2 \min\{B, \|\sigma(W_g'\t{h}')\|_2\}\|W_u'\t{h}'\|_2 \\
    \end{align*}
    From Lemma \ref{lma:linear_err},
    \begin{align*}
        \|y(h) - y'(h')\|_2 \leq& \epsilon + \|W_d\|_2\min\{2B, L (\|W_g\|_2\|\t{h}-\t{h}'\|_2 + \sqrt{n} \delta)\} \|W_u\|_2 \| \t{h}\|_2 \\
        &+ \|W_d\|_2\min\{B, \|\sigma(W_g'\t{h}')\|\}(\|W_u\|_2\|\t{h}-\t{h}'\|_2 + \sqrt{n} \delta) \\
        &+ \|W_d - W_d'\|_2 \min\{B, \|\sigma(W_g'\t{h}')\|_2\}\|W_u'\|_2\|\t{h}'\|_2 \\
    \end{align*}
    Finally, using Lemma \ref{lma:ln_errors},
    \begin{align*}
        \|y(h) - y'(h')\|_2 \leq& \epsilon + \|W_d\|_2\min\{2B, L (\|W_g\|_2\epsilon/s + \sqrt{n} \delta)\} \|W_u\|_2 \sqrt{n} \\
        &+ \|W_d\|_2\min\{B, \|\sigma(W_g'\t{h}')\|\}(\|W_u\|_2\epsilon/s + \sqrt{n} \delta) \\
        &+ \delta \min\{B, \|\sigma(W_g'\t{h}')\|_2\}\|W_u'\|_2\sqrt{n} \\
    \end{align*}
\end{proof}

\begin{corollary}
    For $\sigma$ being the sigmoid function, Lemma \ref{lma:glu_error} becomes
    \begin{align*}
        \|y(h) - y'(h')\|_2 \leq& \epsilon + \frac{\|W_d\|_2 \|W_u\|_2\epsilon}{s}\left(\frac{\sqrt{n}}{4}\|W_g\|_2 + 1\right)  \\
        &+ \sqrt{n}\delta\left(\frac{\sqrt{n}}{4}\|W_d\|_2\|W_u\|_2 + \|W_d\|_2 + \sqrt{n}\|W'_u\|_2\right) 
    \end{align*}
\end{corollary}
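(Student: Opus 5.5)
The plan is to specialize Lemma~\ref{lma:glu_error} directly; no new estimates are needed beyond the two standard facts about the sigmoid. First, $\sigma(z) = 1/(1+e^{-z})$ takes values in $(0,1)$, so we may take $B = 1$. Second, $\sigma'(z) = \sigma(z)(1-\sigma(z)) \le 1/4$, so $\sigma$ is $L$-Lipschitz with $L = 1/4$. We substitute these constants into the bound of Lemma~\ref{lma:glu_error}, using $B$ exactly in the role it plays in that lemma's proof as the multiplicative factor controlling the gate. We then resolve each $\min\{\cdot,\cdot\}$ by choosing the branch that yields the claimed form, which is legitimate since $\min\{a,b\}$ is at most either argument.

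Concretely, we handle the terms of Lemma~\ref{lma:glu_error} in order.

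\textbf{Term 1 (gate path).} In the second term we keep the Lipschitz branch:
\[
\sqrt{n}\|W_d\|_2\|W_u\|_2 \cdot \tfrac14\bigl(\|W_g\|_2\epsilon/s + \sqrt{n}\delta\bigr).
\]

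\textbf{Term 2.} In the third term we bound $\min\{B,\|\sigma(W_g'\t{h}')\|\}$ by $B=1$. This gives
\[
\|W_d\|_2\|W_u\|_2\epsilon/s + \sqrt{n}\,\|W_d\|_2\delta .
\]

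\textbf{Term 3.} In the last term we again bound the minimum by $1$. This gives $\sqrt{n}\|W_u'\|_2\delta$.

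Next we collect the result by powers of $\epsilon$ and $\delta$. The $\epsilon$-terms are the leading $\epsilon$ plus
\[
\frac{\|W_d\|_2\|W_u\|_2\epsilon}{s}\Bigl(\frac{\sqrt n}{4}\|W_g\|_2 + 1\Bigr),
\]
which matches the corollary exactly. The $\delta$-terms factor as
\[
\sqrt n\,\delta\Bigl(\frac{\sqrt n}{4}\|W_d\|_2\|W_u\|_2 + \|W_d\|_2 + \|W_u'\|_2\Bigr).
\]

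The only point needing care is this last factor. Our derivation gives $\|W_u'\|_2$ where the corollary states $\sqrt{n}\|W_u'\|_2$. Since $n \ge 1$, we have $\|W_u'\|_2 \le \sqrt n\|W_u'\|_2$, so our bound is at most the stated one and the corollary follows, slightly loosened. I expect no real obstacle here. The main thing to check is that the branch choices in each $\min$ are consistent, in particular taking the Lipschitz branch for the gate-difference term but the $B$ branch for the other two. One should also confirm that $\|\t h\|_2 = \sqrt n$, which holds by the layer-norm representation in Lemma~\ref{lma:ln_errors} and was already used inside Lemma~\ref{lma:glu_error}, so the $\sqrt n$ factors carry over unchanged.
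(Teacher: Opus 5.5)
Your proposal is correct and follows the paper's proof: substitute $B=1$ and $L=1/4$, take the Lipschitz branch in the gate term and the $B$ branch in the other two, then collect the $\epsilon$ and $\delta$ terms. Your observation about the last factor is also right. The paper's final step is written as ``$=$'' but silently replaces $\|W_u'\|_2$ by $\sqrt{n}\|W_u'\|_2$, so it is really the inequality $\|W_u'\|_2\le\sqrt{n}\|W_u'\|_2$ (for $n\ge 1$) that you make explicit.
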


\begin{proof}
    Sigmoids have Lipschitz constant 1/4 and is bounded by 1, and
    \begin{align*}
        \|y(h) - y'(h')\|_2 \leq& \epsilon + \frac{\sqrt{n}}{4}\|W_d\|_2\|W_u\|_2(\|W_g\|_2\epsilon/s + \sqrt{n} \delta) \\
        &+ \|W_d\|_2(\|W_u\|_2\epsilon/s + \sqrt{n} \delta) \\
        &+ \sqrt{n}\|W_u'\|_2\delta  \\
        =& \epsilon + \frac{\|W_d\|_2 \|W_u\|_2\epsilon}{s}\left(\frac{\sqrt{n}}{4}\|W_g\|_2 + 1\right)  \\
        &+ \sqrt{n}\delta\left(\frac{\sqrt{n}}{4}\|W_d\|_2\|W_u\|_2 + \|W_d\|_2 + \sqrt{n}\|W'_u\|_2\right)
    \end{align*}
\end{proof}

This provides an understanding of the error that occurs in the GLU layers of the transformer. Next, the error through the attention layers is investigated.

\begin{lemma}
    Consider the following function:
    \[y(H) = \Attn(H) := H + W_v\LN(H)\sm(\LN(H)^\top W_q^\top W_k \LN(H))\]
    Assume that for each location $i \in [w]$, it holds that $\|H_i - H_i'\|_2$. Under the assumption that $\|W_* - W_*'\|_2 \leq \delta$ for every parameter $W_*$. Also, let $s = \min\{\std(H_w), \std(H'_w)\}$. Then
    \begin{align*}
        \|y(H)_w-y'(H')_w\|_2 \leq& \epsilon + \|W_v\|_2\sqrt{n}\|a-a'\|_1 \\
        &+ \|W_v\|_2 \epsilon/s \\
        &+ \delta \sqrt{n} \\
    \end{align*}
\end{lemma}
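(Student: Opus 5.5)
The plan is to mirror the GLU lemma: split the output error at position $w$ into the skip-connection part and the attention part, then peel off one factor at a time with the triangle inequality. Write $\t{H} = \LN(H)$ and $\t{H}' = \LN(H')$. Write the attention weights for position $w$ as $a = \sm(\t{H}^\top W_q^\top W_k \t{H}_w)$, and $a'$ for the primed version. Then $y(H)_w = H_w + W_v \t{H} a$. The skip-connection gives $\|H_w - H'_w\|_2 \le \epsilon$ directly. For the remaining term I will telescope
\begin{align*}
W_v\t{H}a - W_v'\t{H}'a' &= W_v\t{H}(a-a') + W_v(\t{H}-\t{H}')a' + (W_v-W_v')\t{H}'a'.
\end{align*}
Each of these three pieces should produce one of the three remaining terms in the stated bound.

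For the first piece, every column of $\t{H}$ has norm $\sqrt{n}$, because LayerNorm normalizes to norm $\sqrt{n}$ (the same normalization used in Lemma \ref{lma:ln_errors}). A matrix--vector product is a combination of columns, so $\|\t{H}(a-a')\|_2 \le \sqrt{n}\|a-a'\|_1$. This gives $\|W_v\|_2\sqrt{n}\|a-a'\|_1$.

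For the second piece, $a'$ is a probability vector, so $(\t{H}-\t{H}')a'$ is a convex combination of the columns $\t{H}_i - \t{H}'_i$. By Lemma \ref{lma:ln_errors}, each of these columns has norm at most $\epsilon/s$. This yields $\|W_v\|_2\epsilon/s$. Here I am taking $s$ as the minimum standard deviation over the relevant positions; the statement writes only $H_w$, and I would flag this and interpret $s$ uniformly over positions.

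For the third piece, the same convexity argument gives $\|\t{H}'a'\|_2 \le \sqrt{n}$. Together with $\|W_v - W_v'\|_2 \le \delta$, this produces $\delta\sqrt{n}$.

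The main obstacle is interpretive rather than technical. The statement leaves $\|a-a'\|_1$ as an unbounded quantity, and I would keep it that way rather than bound the softmax perturbation. The lemma also omits the right-hand side of the hypothesis $\|H_i - H_i'\|_2$. I will read it as $\le \epsilon$ for all $i \in [w]$, consistent with the earlier lemmas.

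If one later wants to control $\|a-a'\|_1$, the natural route is that softmax is $1$-Lipschitz from $\ell_\infty$ to $\ell_1$ (up to a factor of 2). The logit differences can then be bounded via Lemma \ref{lma:linear_err} applied to $W_q^\top W_k$. That bound is not needed for the stated inequality.
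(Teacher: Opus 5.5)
Your argument is correct, and it proves the displayed inequality exactly as stated. The paper uses the same three-term split of $W_v\tilde H a - W_v'\tilde H'a'$, but it bounds each piece by submultiplicativity against the whole $n\times m$ matrix, where $m$ is the number of positions. It writes $\|W_v\tilde H\|_2\|a-a'\|_2$, $\|W_v\|_2\|\tilde H-\tilde H'\|_2\|a'\|_2$ and $\|W_v-W_v'\|_2\|\tilde H'\|_2\|a'\|_2$. It then uses only $\|a'\|_2\le 1$, $\|\tilde H-\tilde H'\|_2\le\sqrt{m}\,\epsilon/s$ and $\|\tilde H'\|_2\le\sqrt{mn}$. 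That route picks up factors of $\sqrt{m}$, giving $\|W_v\|_2\sqrt{m}\,\epsilon/s$ and $\delta\sqrt{mn}$. The paper then goes on to bound $\|a-a'\|_2$ via the $1/2$-Lipschitz property of softmax. It ends with a fully explicit but context-length-dependent bound, not the one displayed in the statement.

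Your column-wise estimates are what remove the $m$-dependence and leave $\|a-a'\|_1$ in the stated form. These are $\|\tilde H u\|_2\le\max_i\|\tilde H_i\|_2\,\|u\|_1$, and the observation that $(\tilde H-\tilde H')a'$ and $\tilde H'a'$ are convex combinations of columns. What the paper's version buys instead is explicit control of the softmax term, which, as you note, can be appended afterwards.

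Your readings of the defective hypotheses are the right ones. The paper makes the same implicit uniform-$s$ assumption when it passes from $\|\tilde H_i-\tilde H_i'\|_2\le\epsilon/s$ to a bound on the whole matrix. One further point is worth stating explicitly. The closeness hypothesis is only given for $i\in[w]$, so your convexity step for $(\tilde H-\tilde H')a'$ needs $a'$ to be supported on $[w]$. That holds under causal attention, or when $w$ is the last position. The paper relies on this implicitly as well.
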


\begin{proof}
    Let $\t{H} = \LN(H)$ and $\t{H'} = \LN(H')$, where $\LN(\cdot)$ is applied elementwise. Further, let $v = W_v\t{H}$, $q = W_q\t{H}$, $k = W_k\t{H}$, and $a_i = \sm(q_w^\top k_i / \sqrt{n})$. Then
    \begin{align*}
        \|y(H)_w-y'(H')_w\|_2 \leq& \|H_w-H'_w\|_2 + \|W_v\t{H}\|_2\| a - a' \|_2 \\
        &+ \|W_v\t{H}-W_v'\t{H}'\|_2 \|a'\|_2 \\
        \leq& \|H_w-H'_w\|_2 + \|W_v\t{H}\|_2\| a - a' \|_2 \\
        &+ \|W_v\|_2\|\t{H}-\t{H}'\|_2 \|a'\|_2 \\
        &+ \|W_v-W_v'\|_2 \|\t{H}'\|_2\|a'\|_2 \\
    \end{align*}
    Note that since $\sum_i a_i = 1$, $\|a_i\|_2 \leq 1$, so the bound can be updated as
    \begin{align*}
        \|y(H)_w-y'(H')_w\|_2 \leq& \epsilon + \|W_v\|_2\frac{mn\epsilon}{2s}(\|W_q\|_2 + \|W'_q\|_2)\|W_k\|_2 + mn^{3/2}\|W_v\|_2\delta (\|W_k\|_2 + \|W_q'\|_2) \\
        &+ \|W_v\|_2\frac{\sqrt{m} \epsilon}{s} + \delta \sqrt{mn} \\
    \end{align*}
    since if $\|\t H_i - \t H_i'\|_2 \leq \epsilon / s$, then $\|\t H - \t H'\|_2 \leq \sqrt{m} \epsilon / s$. What remains to be seen is how $\|a - a'\|_2$ behaves. The first thing to note is that the softmax operator is $1/2$-contractive, so
    \begin{align*}
        \|a - a'\|_2 \leq& \frac{1}{2}\|\t {H}_w^\top W_q^\top W_k \t H - {\t {H_w'}}^\top {W_q'}^\top W_k' \t {H}'\| _2 \\
        \leq& \frac{\sqrt{m}}{2} \max_i \{ \|\t{H}_w - \t{H}'\|_2 \|W_q^\top W_k \t{H}_i\| _2 \\
        &+ \|\t{H}'_w\|_2 \|W_q - W_q'\|_2\| W_k \t{H}_i \|_2 \\
        &+ \|\t{H}'_w\|_2 \|W_q'\|_2 \|W_k \t{H}_i - W_k' \t{H}_i'\|\} \\
        \leq& \frac{\sqrt{m}}{2} \max_i \{ \frac{\epsilon}{s} \|W_q\|_2 \|W_k\|_2 \sqrt{n} \\
        &+ \sqrt{n} \delta\| W_k \|_2 \sqrt{n} \\
        &+ \sqrt{n} \|W_q'\|_2 (\|W_k\| \frac{\epsilon}{s} + \sqrt{n}\delta)\} \\
        &= \frac{\sqrt{mn}\epsilon}{2s}(\|W_q\|_2 + \|W'_q\|_2)\|W_k\|_2 + \sqrt{m}n\delta (\|W_k\|_2 + \|W_q'\|_2)
    \end{align*}
\end{proof}

\section{Experiment Details}

\subsection{Dataset details}
\label{sec:appendix_datasets}

Table \ref{tab:dataset_details} shows the number of train, test, and validation set of each dataset we use. We use the default train and test split from the huggingface repository of each dataset. When there is only 2 splits (e.g., in BoolQ and GSM8K) we split train set with ratio 80:20 and random seed 42, and take the :20 split as validation set. 
\begin{table*}[ht]
\small
\caption{Dataset details. Validation is created from train; table shows original dataset sizes.}
\label{tab:dataset_details}
\centering
\begin{tabular}{lccc}
    \toprule
    Dataset
          & Train 
          & Test 
          & Validation \\
    \toprule 
    BoolQ
        & 9,430
        & 3,270
        & -- \\
    \midrule
    Winogrande
        & 9,250
        & 1,770
        & 1,270 \\
    \midrule
    ARC-Challenge
        & 1,120
        & 1,170
        & 299 \\
    \midrule
    GSM8K
        & 8,790
        & 1,320
        & -- \\
    \midrule
    AQuA
        & 97,467
        & 254
        & 254 \\
    \midrule
    ListOps
        & 96,000
        & 2,000
        & 2,000 \\
    \bottomrule
\end{tabular}
\end{table*}

Huggingface links:
\begin{itemize}
    \item \href{https://huggingface.co/datasets/google/boolq}{BoolQ}
    
    \item \href{https://huggingface.co/datasets/allenai/winogrande}{Winogrande} subset: winogrande\_debiased

    \item \href{https://huggingface.co/datasets/allenai/ai2_arc}{ARC-Challenge} subset: ARC-Challenge
    
    \item \hyperlink{https://huggingface.co/datasets/openai/gsm8k}{GSM8K} subset: main

    \item
    \hyperlink{https://huggingface.co/datasets/deepmind/aqua_rat}{AQuA}

    \item \hyperlink{https://huggingface.co/datasets/fengyang0317/listops-64}{ListOps}
\end{itemize}

Table \ref{tab:prompt_templates} details the prompt used for each dataset. 

\begin{table*}[ht]
\small
\centering
\caption{Dataset Prompt Templates}
\label{tab:prompt_templates}
\begin{tabularx}{\textwidth}{lX}
    \toprule
    \textbf{Dataset} & \textbf{Prompt Template} \\
    \midrule
    \textbf{BoolQ} & 
    Answer the question with a true/false based on the given passage. \newline
    Passage: \texttt{\{passage\}} \newline
    Question: \texttt{\{question\}} \\
    \midrule
    \textbf{Winogrande} &
    Please choose the correct answer to fill in the blank to complete the given sentence: \texttt{\{sentence\}} \newline
    Option1: \texttt{\{option1\}} \newline
    Option2: \texttt{\{option2\}} \\
    \midrule

    \textbf{ARC-Challenge} & 
    Answer the following multiple-choice question.\\
    &Question: \texttt{\{question\}} \\
    &\texttt{\{options\}}
    \\
    \midrule
    
    \textbf{GSM8K} &  
    \texttt{\{question\}} \\
    \midrule

    \textbf{AQuA} &
    Answer the following multiple-choice math question.\\
    &Question: \texttt{\{question\}} \\
    &\texttt{\{options\}}
    \\ \midrule
    
    \textbf{ListOps} &
    Evaluate the value of the following nested list expression. \newline
    Return only the final numeric result inside \texttt{<answer>...</answer>}. \newline\newline
    Expression: \newline
    \texttt{\{expression\}} \\
    \bottomrule
\end{tabularx}
\end{table*}

\subsection{Hyperparameters}
\label{sec:appendix_hparams}

For all model and method combination, we keep the size of hyperparam space the same to 5. Based on the number of parameters, we shift left and right. Lesser parameters are shifted to the right (higher values), and more parameters to the left. This is to account for the learnability in different parameter count.

For $1$B models, we sweep over the following learning rates:
\begin{itemize}
    \item SFT: [$5e^{-6}, 1e^{-5}, 2e^{-5}, 5e^{-5}, 1e^{-4}$]
    \item LoRA: [$5e^{-5}, 1e^{-4}, 3e^{-4}, 7e^{-4}, 1.5e^{-3}$]
    \item Activation steering r=8 (ReFT and ours): [$5e^{-4}, 7.5e^{-4}, 1e^{-3}, 2e^{-3}, 3e^{-3}$]
    \item Activation steering r=1 and vector: [$5e^{-4}, 1e^{-3}, 2e^{-3}, 3e^{-3}, 5e^{-3}$]
    \item Joint: [$1e^{-4}, 2e^{-4}, 5e^{-4}, 7e^{-4}, 1e^{-3}$]
\end{itemize}

For $4$B models, we sweep over the following learning rates:
\begin{itemize}
    \item SFT: [$2e^{-6}, 5e^{-6}, 1e^{-5}, 2e^{-5}, 3e^{-5}$]
    \item LoRA: [$3e^{-5}, 7e^{-5}, 1.5e^{-4}, 3e^{-4}, 7e^{-4}$]
    \item Activation steering r=8 (ReFT and ours): [$3e^{-4}, 5e^{-4}, 8e^{-4}, 1.2e^{-3}, 2e^{-3}$]
    \item Activation steering r=1 and vector: [$5e^{-4}, 7e^{-3}, 1e^{-3}, 2e^{-3}, 3e^{-3}$]
    \item Joint: [$1e^{-4}, 2e^{-4}, 5e^{-4}, 7e^{-4}, 1e^{-3}$]
\end{itemize}

For ReFT, we apply the intervention on all layers (as recommended by their paper), and tune the intervention location between the last prompt token (the default in the code) and (p+7, s+7), their best hyperparameter for GSM8K. For LoFIT, we intervene on all heads.

\begin{table*}[ht]
\small
\caption{Chosen hyperparameters to reproduce experiment numbers (Ours) in Table \ref{tab:exp_main_highcap}}
\label{tab:hparam_highcap}
\centering
\begin{tabular}{lcccc}
    \toprule
    Model
          & BoolQ 
          & WinoG 
          & GSM8K 
          & ListOps \\
    \toprule 
    Llama-3.2-1B
        & $1e^{-3}$
        & $7.5e^{-4}$
        & $7.5e^{-4}$
        & $7.5e^{-4}$ \\
    \midrule
    gemma-3-1b
        & $5e^{-4}$
        & $7.5e^{-4}$
        & $2e^{-3}$
        & $1e^{-3}$ \\
    \midrule
    Qwen 3 4B
        & $5e^{-4}$
        & $1e^{-3}$
        & $7.5e^{-4}$
        & $2e^{-3}$ \\
    \midrule
    Llama 3.1 8B
        & $5e^{-4}$
        & $5e^{-4}$
        & $5e^{-4}$
        & $2e^{-4}$  \\
    \bottomrule
\end{tabular}
\end{table*}

\begin{table*}[ht]
\small
\caption{Chosen hyperparameters to reproduce experiment numbers (Ours) in Table \ref{tab:exp_main_lowcap}}
\label{tab:hparam_lowcap}
\centering
\begin{tabular}{lccccc}
    \toprule
    Model & Variant
          & BoolQ 
          & WinoG 
          & GSM8K 
          & ListOps \\
    \toprule 
    \multirow{2}{*}{Llama-3.2-1B}
    & Ours r=1
        & $2e^{-3}$
        & $5e^{-4}$
        & $5e^{-3}$
        & $3e^{-3}$ \\
    & Ours vector 
        & $5e^{-4}$
        & $1e^{-3}$
        & $2e^{-3}$
        & $1e^{-3}$
    
    \\
    \midrule
    \multirow{2}{*}{gemma-3-1b}
    & Ours r=1
        & $2e^{-3}$
        & $5e^{-4}$
        & $1e^{-3}$
        & $2e^{-3}$
    \\
    & Ours vector 
        & $5e^{-3}$
        & $5e^{-3}$
        & $1e^{-3}$
        & $5e^{-3}$
    
    \\
    \midrule
    \multirow{2}{*}{Qwen 3 4B}
    & Ours r=1
        & $7e^{-4}$
        & $7e^{-4}$
        & $1e^{-4}$
        & $7e^{-4}$ \\
    & Ours vector 
        & $2e^{-3}$ 
        & $2e^{-3}$
        & $7e^{-4}$
        & $1e^{-3}$
    \\
    \midrule
    Llama 3.1 8B
    & Ours r=1
        & $7e^{-4}$
        & $7e^{-4}$
        & $5e^{-4}$
        & $1e^{-3}$
        \\
    & Ours vector 
        & $7e^{-4}$ 
        & $7e^{-4}$
        & $1e^{-3}$
        & $1e^{-3}$
    \\
    \bottomrule
\end{tabular}
\end{table*}

\begin{table*}[ht]
\small
\caption{Chosen hyperparameters to reproduce experiment numbers (Joint Orth) in Table \ref{tab:exp_joint_stacked}}
\label{tab:hparam_joint}
\centering
\begin{tabular}{lccc}
    \toprule
    Model
          & BoolQ 
          & WinoG 
          & GSM8K  \\
    \toprule 
    Llama-3.2-1B
        & $7e^{-4}$
        & $1e^{-4}$
        & $1e^{-4}$ \\
    \midrule
    gemma-3-1b
        & $5e^{-4}$
        & $7.5e^{-4}$
        & $2e^{-3}$ \\
    \midrule
    Qwen 3 4B
        & $5e^{-4}$
        & $1e^{-3}$
        & $7.5e^{-4}$ \\
    \bottomrule
\end{tabular}
\end{table*}

\subsection{Performance Standard Deviation}
\label{sec:appendix_std}

\begin{table*}[ht]
\small
\caption{Standard deviation of experiments in Table \ref{tab:exp_main_highcap}}
\label{tab:std_highcap}
\centering

\begin{tabular}{ll|c|c|c|c}
    \toprule
    Model & Method 
          & BoolQ 
          & WinoG 
          & GSM8K 
          & ListOps \\
    \toprule 
    \multirow{4}{*}{Llama-3.2-1B}
        
    & LoRA 
        & $8.0 \times 10^{-1}$
        & $4.8$
        & $5.5\times 10^{-1}$
        & $7.4\times 10^{-1}$ \\
        
    & ReFT 
        & $6.8 \times 10^{-1}$
        & $2.1 \times 10^{-1}$
        & $3.5 \times 10^{-1}$
        & $4.3 \times 10^{-1}$ \\
        
    & Ours 
        & $2.7 \times 10^{-1}$
        & $9.2 \times 10^{-1}$
        & $4.4 \times 10^{-1}$
        & $2.9 \times 10^{-1}$ \\
    \midrule
    \multirow{4}{*}{gemma-3-1b}
     
     & LoRA 
        & $6.1 \times 10^{-1}$
        & $3.3$
        & $3.0 \times 10^{-1}$
        & $9.3$ \\
     
     & ReFT 
        & $6.0 \times 10^{-1}$
        & $0.0$
        & $6.5 \times 10^{-1}$
        & $6.7 \times 10^{-1}$ \\
     
     & Ours
        & $3.3 \times 10^{-1}$
        & $7.8 \times 10^{-1}$
        & $2.0 \times 10^{-1}$
        & $1.7 \times 10^{-1}$ \\
    \midrule
    \multirow{4}{*}{Qwen 3 4B}
        
    & LoRA 
        & $3.3 \times 10^{-1}$
        & $8.0 \times 10^{-1}$
        & $7.2 \times 10^{-1}$
        & $7.3 \times 10^{-1}$  \\
        
    & ReFT 
        & $3.8 \times 10^{-1}$
        & $1.6 \times 10^{-1}$
        & $1.20$
        & $1.01$ \\
        
    & Ours
        & $2.4 \times 10^{-1}$
        & $1.1$
        & $1.0$
        & $2.5 \times 10^{-1}$ \\
    \midrule
    \multirow{4}{*}{Llama 3.1 8B}
    & SFT
        &  0.0
        &  0.0
        &  0.0
        &  0.0  \\
        
    & LoRA 
        & $4.0 \times 10^{-1}$
        & $5.2 \times 10^{-1}$
        & $8.2 \times 10^{-1}$
        & $1.45$  \\
        
    & ReFT 
        & $7.0 \times 10^{-1}$
        & $1.8$
        & $6.3 \times 10^{-1}$
        & $4.2 \times 10^{-1}$ \\
        
    & Ours
        & $3.7 \times 10^{-1}$
        & $3.3 \times 10^{-1}$
        & $1.2$
        & $1.9$ \\
    \bottomrule
\end{tabular}
\end{table*}

Table \ref{tab:std_highcap} shows the standard deviation across the 5 runs of the numbers shown in Table \ref{tab:exp_main_highcap}. We report a single SFT run, hence std dev is not applicable.

\begin{table*}[ht]
\small
\caption{Standard deviation of experiments in Table \ref{tab:exp_main_lowcap}}
\label{tab:std_lowcap}
\centering

\begin{tabular}{ll|c|c|c|c}
    \toprule
    Model & Method 
          & BoolQ 
          & WinoG 
          & GSM8K 
          & ListOps \\
    \toprule 
    \multirow{4}{*}{Llama-3.2-1B}
        
    & LoFIT 
        & $8.0 \times 10^{-1}$
        & $4.8$
        & $5.5\times 10^{-1}$
        & $7.4\times 10^{-1}$ \\
        
    & JoLA 
        & $6.8 \times 10^{-1}$
        & $2.1 \times 10^{-1}$
        & $3.5 \times 10^{-1}$
        & $4.3 \times 10^{-1}$ \\
        
    & Ours vector 
        & $4.2 \times 10^{-1}$
        & $5.2 \times 10^{-1}$
        & $4.5 \times 10^{-1}$
        & $3.4 \times 10^{-1}$ \\

    & Ours r=1 
        & $4.7 \times 10^{-1}$
        & $4.7 \times 10^{-1}$
        & $5.1 \times 10^{-1}$
        & $3.7 \times 10^{-1}$ \\
        
    \midrule
    \multirow{4}{*}{gemma-3-1b}
     
     & LoFIT 
        & $5.1 \times 10^{-1}$
        & $7.1 \times 10^{-1}$
        & $1.02$
        & $3.0 \times 10^{-1}$ \\
     
     & JoLA 
        & $5.3 \times 10^{-1}$
        & $3.2 \times 10^{-1}$
        & $1.10$
        & $2.5 \times 10^{-1}$ \\
     
     & Ours vector
        & $8.2 \times 10^{-1}$
        & $8.9 \times 10^{-1}$
        & $6.4 \times 10^{-1}$
        & $2.7 \times 10^{-1}$ \\

    & Ours r=1 
        & $1.20$
        & $7.9 \times 10^{-1}$
        & $8.2 \times 10^{-1}$
        & $2.4 \times 10^{-1}$ \\
    \midrule
    \multirow{4}{*}{Qwen 3 4B}
        
    & LoFIT 
        & $5.4 \times 10^{-1}$
        & $9.2 \times 10^{-1}$
        & $6.2 \times 10^{-1}$
        & $3.9 \times 10^{-1}$  \\
        
    & JoLA 
        & $2.2 \times 10^{-2}$
        & $1.03$
        & $5.4 \times 10^{-1}$
        & $1.1 \times 10^{-1}$ \\
        
    & Ours vector
       & $2.8 \times 10^{-1}$
        & $5.1 \times 10^{-1}$
        & $8.5 \times 10^{-1}$
        & $4.1 \times 10^{-1}$ \\

     & Ours r=1 
        & $2.5 \times 10^{-1}$
        & $5.8 \times 10^{-1}$
        & $7.5 \times 10^{-1}$
        & $4.4 \times 10^{-1}$ \\
    \midrule
    \multirow{4}{*}{Llama 3.1 8B}
        
    & LoFIT 
        & $6.4 \times 10^{-1}$
        & $1.2$
        & $1.3$
        & $4.8 \times 10^{-1}$ \\
        
    & JoLA 
        & $7.6 \times 10^{-1}$
        & $1.67$
        & $9.7 \times 10^{-1}$
        & $4.2 \times 10^{-1}$ \\
        
    & Ours vector
       & $1.8 \times 10^{-3}$
        & $1.2 \times 10^{-3}$
        & $1.3 \times 10^{-3}$
        & $1.1 \times 10^{-3}$ \\

     & Ours r=1 
        & $2.1 \times 10^{-1}$
        & $4.7 \times 10^{-1}$
        & $1.1$
        & $1.9$ \\
    \bottomrule
\end{tabular}
\end{table*}

Table \ref{tab:std_lowcap} shows the standard deviation across the 5 runs of the numbers shown in Table \ref{tab:exp_main_lowcap}.

\begin{table*}[ht]
\small
\caption{Standard deviation of experiments in Table \ref{tab:exp_joint_stacked}}
\label{tab:std_joint}
\centering

\begin{tabular}{ll|c|c|c}
    \toprule
    Model & Method 
          & BoolQ 
          & WinoG 
          & GSM8K \\
    \toprule 
    \multirow{4}{*}{Llama-3.2-1B}
        
    & LoRA 
        & $8.17 \times 10^{-1}$
        & 4.86 
        & $5.50 \times 10^{-1}$ \\
        
    & Adapter 
        & $4.74 \times 10^{-1}$
        & $4.10 \times 10^{-1}$
        & $1.48 \times 10^{-1}$ \\
        
    & Joint
        & $8.14 \times 10^{-1}$
        & 3.03 
        & 1.58 \\

    & Joint Orth 
        & $5.08 \times 10^{-1}$
        & 4.02 
        & $5.97 \times 10^{-1}$ \\
        
    \midrule
    \multirow{4}{*}{gemma-3-1b}
     
     & LoRA 
        & $6.11 \times 10^{-1}$
        & 3.34 
        & $3.03 \times 10^{-1}$ \\
     
     & Adapter 
        & $6.91 \times 10^{-1}$
        & $6.42 \times 10^{-1}$
        & $5.46 \times 10^{-1}$ \\
     
     & Joint
        & $9.28 \times 10^{-1}$
        & $9.09 \times 10^{-1}$
        & $2.84$\\

    & Joint Orth
        & $9.32 \times 10^{-1}$
        & 1.32 
        & 1.04 \\
    \midrule
    \multirow{4}{*}{Qwen 3 4B}
        
    & LoRA 
        & $3.29 \times 10^{-1}$
        & $8.02 \times 10^{-1}$
        & $7.16 \times 10^{-1}$ \\
        
    & Adapter 
        & $3.08 \times 10^{-1}$
        & $7.23 \times 10^{-1}$
        & $6.58 \times 10^{-1}$ \\
        
    & Joint
       & 1.02
        & $6.14 \times 10^{-1}$
        & $9.86 \times 10^{-1}$ \\

     & Joint Orth
        & $5.41 \times 10^{-1}$
        & 1.34 
        & $1.30 \times 10^{-1}$ \\
    \bottomrule
\end{tabular}
\end{table*}

Table \ref{tab:std_joint} shows the standard deviation of the numbers shown in Table \ref{tab:exp_joint_stacked}.

\subsection{Hardware Details}
\label{sec:appendix_hardware}
All experiments are conducted on a single NVIDIA A100-SXM4-40GB for 1B and 4B models, and on a single NVIDIA A100-SXM4-80GB for 8B model (Section \ref{section:result_rank}).

\subsection{Training Setup.}
\label{sec:appendix_training}
All models are trained using the AdamW optimizer \cite{Loshchilov2017DecoupledWD} with a batch size of 8 (simulated via gradient accumulation where necessary for larger models). For SFT and LoRA, we employ a cosine learning rate scheduler with a warmup ratio of $0.1$ and $0.06$, respectively. In contrast, following the minimalist design of activation-based steering, our adapter and the ReFT baseline do not utilize weight decay, warmup, or a learning rate scheduler. BoolQ, Winogrande, ARC Challenge, AQuA and ListOps are trained for 1 epoch and GSM8K for 3 epochs.

\noindent Our RL experiments on section \ref{section:result_it} uses 512 sequence length, 1025 completion length, batch size 6, gradient accumulation 4 steps and 6 rollout generations. We sweep 12 learning rates and average across 3 random seeds. We use DeepSeek's default chat template for formatting.

\subsection{Code}
Our code is available here: \href{https://github.com/SprocketLab/steerling.git}{https://github.com/SprocketLab/steerling.git}

\section{Extra Experiment Results}

\subsection{Activation adapter matrix norm after orthogonality projection}
\label{appendix:adapter_orth_norm}

\begin{figure}
    \centering
    \includegraphics[width=.5\linewidth]{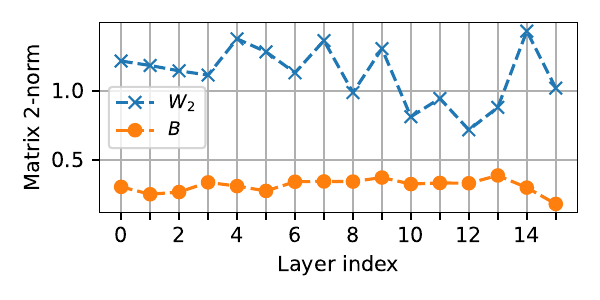}
    \caption{$W_2$ after orthogonality projection retains its magnitude.}
    \label{fig:norm}
\end{figure}

\end{document}